\newcommand{\Cpp}{C\raise.08ex\hbox{\tt ++}\xspace}
\newcommand{\Cfree}{\ensuremath{\calX_{\text{free}}}\xspace}
\newcommand{\Cforb}{\ensuremath{\calX_{\text{forb}}}\xspace}
\newcommand{\Cs}{C-space\xspace}
\newcommand{\eRRT}{LBT-RRT\xspace}
\newcommand{\eRRTbold}{LBT-RRT\xspace}
\newcommand{\Glb}{\ensuremath{\calG_{lb}}\xspace}
\newcommand{\Tlb}{\ensuremath{\calT_{lb}}\xspace}
\newcommand{\Tub}{\ensuremath{\calT_{apx}}\xspace}
\newcommand{\set}[1]{\ensuremath{\{ #1\}}}
\newcommand{\calG}{\ensuremath{\mathcal{G}}\xspace}
\newcommand{\calX}{\ensuremath{\mathcal{X}}\xspace}
\newcommand{\calT}{\ensuremath{\mathcal{T}}\xspace}
\newtheorem{thm}{Theorem}[section]
\newtheorem{cor}[thm]{Corollary}
\newtheorem{lem}[thm]{Lemma}
\newtheorem{obs}[thm]{Observation}
\newcommand{\ignore}[1]{}
\newcommand{\textVersion}[2]
{\ifthenelse{\boolean{ICRA} }{#1}{}\ifthenelse{\boolean{ARXIV}}{#2}{}}
\newcommand{\added}[1]{%
  \ifthenelse{\boolean{FINAL}}%
  {#1}%
  {{\color{blue} #1}}%
}
\newcommand{\update}[1]{%
  \ifthenelse{\boolean{FINAL}}%
  {#1}%
  {{\color{red} #1}}%
}
\newcounter{osCounter}
\newcommandx{\oren}[2][1=]{\todo[linecolor=red,backgroundcolor=red!25,bordercolor=red,#1]{OS [\arabic{osCounter}]:  #2}
\stepcounter{osCounter}\xspace}
\begin{document}

\title{	Asymptotically near-optimal RRT \\
				for fast, high-quality, motion planning
			}
\author
{
	\IEEEauthorblockN{Oren Salzman
										and
										Dan Halperin}
	\IEEEauthorblockA{
			Blavatnik School of Computer Science,
			Tel-Aviv University, Israel}
		\thanks{This work has been supported in part by the 7th 
						Framework Programme for Research of the European Commission, under
						FET-Open grant number 255827 (CGL---Computational Geometry 
						Learning), by the Israel Science Foundation (grant no. 1102/11), 
						by the German-Israeli Foundation (grant no. 1150-82.6/2011), and 
						by the Hermann Minkowski--Minerva Center for Geometry at Tel Aviv 
						University.}
		\thanks{A preliminary and partial version of this manuscript appeared in 
						the proceedings of the 2014 IEEE International Conference on 
						Robotics and Automation (ICRA 2014), pages 4680-4685.}
}


\maketitle


\begin{abstract}
We present \emph{Lower Bound Tree-RRT} (\eRRTbold), a single-query sampling-based algorithm that is \emph{asymptotically near-optimal}. 
Namely, the solution extracted from \eRRTbold converges to a solution that is within an approximation factor of $\mathbf{1+\boldsymbol{\varepsilon}}$ of the optimal solution.
Our algorithm allows for a continuous interpolation between the fast RRT algorithm and the asymptotically optimal RRT* and RRG algorithms. 
When the approximation factor is $\mathbf{1}$ (i.e., no approximation is allowed), \eRRTbold behaves like RRG.
When the approximation factor is unbounded, \eRRTbold behaves like  RRT.
In between, \eRRTbold is shown to produce paths that have higher quality than RRT would produce and run faster than RRT* would run.
This is done by maintaining a tree which is a sub-graph of the RRG roadmap 
and a second, auxiliary graph, which we call the \emph{lower-bound} graph.
The combination of the two roadmaps, which is faster to maintain than the roadmap maintained by RRT*, efficiently guarantees asymptotic near-optimality.
We suggest to use \eRRTbold for high-quality, anytime motion planning.
We demonstrate the performance of the algorithm for scenarios ranging from 3 to 12 degrees of freedom and show that even for small approximation factors, the algorithm produces high-quality solutions (comparable to RRG and RRT*)  with little running-time overhead when compared to RRT.
\end{abstract}
\IEEEpeerreviewmaketitle

\section{Introduction and related work}
Motion planning is a fundamental research topic in robotics with applications in diverse domains such as surgical planning, computational biology, autonomous exploration,
search-and-rescue, and warehouse management.
Sampling-based planners such as PRM~\cite{KSLO96}, RRT~\cite{KL00} and their many variants enabled solving motion-planning problems that had been previously considered infeasible~\cite[C.7]{CBHKKLT05}. 
Recently, there is growing interest in the robotics community in finding \emph{high-quality} paths, which turns out to be a non-trivial problem~\cite{KF11, NRH10}. 
Quality can be measured in terms of, for example, length, clearance, smoothness, energy, to mention a few criteria, or some combination of the above.

\subsection{High-quality planning with sampling-based algorithms}
Unfortunately, planners such as RRT and PRM produce solutions that may be far from optimal~\cite{KF11, NRH10}. 
Thus, many variants of these algorithms and heuristics were proposed in order to produce high-quality paths.

\vspace{2mm}
\textbf{Post-processing existing paths:}
Post-processing an existing path by applying \emph{shortcutting} is a common, effective, approach to increase path quality; see, e.g.,~\cite{GO07}. 
Typically, two non-consecutive configurations are chosen randomly along the path. If the two configurations can be connected using a straight-line segment in the configuration space and this connection improves the quality of the original path, the segment replaces the original path that connected the two configurations.
The process is continued iteratively until a termination condition~holds.

\vspace{2mm}
\textbf{Path hybridization:}
An inherent problem with path post-processing is that it is local in nature.
A path that was post-processed using shortcutting often remains in the  same homotopy class of the original path.
Carefully combining even a small number of different paths (that may be of low quality) often enables the construction of a higher-quality path~\cite{REH11}.

\vspace{2mm}
\textbf{Online optimization:}
Changing the sampling strategy~\cite{ABDJV98, LTA03, US03, SWT09},
or the connection scheme to a new milestone~\cite{US03, SLN00} are examples of heuristics proposed to create higher-quality solutions.
Additional approaches include, among others, useful cycles~\cite{GO07} and random restarts~\cite{WB08}.

\vspace{2mm}
\textbf{Asymptotically optimal and near-optimal solutions:}
In their seminal work, Karaman and Frazzoli~\cite{KF11} give a rigorous analysis of the performance of the RRT and PRM algorithms. 
They show that with probability one, the algorithms will not produce the optimal path.
By modifying the connection scheme of a new sample to the existing data structure, they propose the PRM* and the RRG and RRT* algorithms
(variants of the PRM and RRT algorithms, respectively)
all of which are shown to be \emph{asymptotically optimal}. 
Namely, as the number of samples tends to infinity, the solution obtained by these algorithms converges to the optimal solution with probability one.
To ensure asymptotic optimality, the number of nodes each new sample is connected~to is proportional to $\log (n)$ (here $n$ is the number of free samples).

As PRM* may produce prohibitively large graphs,
recent work has focused on sparsifying these graphs.
This can be done as a post-processing stage of the PRM*~\cite{SSAH14, MB11-IROS}, 
or as a modification of PRM*~\cite{MB11-ISRR, MB12, DB14}.

The performance of RRT* can be improved using several heuristics that bear resemblance to the lazy approach used in this work~\cite{PKSFTW11}.
Additional heuristics to speed up the convergence rate of RRT* were presented in RRT*-SMART~\cite{INMAH13}.
Recently, RRT$^\#$~\cite{AT13} was suggested as an asymptotically-optimal algorithm with a faster convergence rate when compared to RRT*.
RRT$^\#$ extends its roadmap in a similar fashion to RRT* but adds a replanning procedure.
This procedure  ensures that the tree rooted at the initial state contains lowest-cost path information for vertices which have the potential to be part of the optimal solution.
Thus, in contrast to RRT* which only performs \emph{local} rewiring of the search tree,
RRT$^\#$ efficiently propagates changes to \emph{all} the relevant parts of the roadmap.
Janson and Pavone~\cite{JP13} introduced the asymptotically-optimal Fast Marching Tree algorithm (FMT*). The single-query asymptotically-optimal algorithm maintains a tree as its roadmap.
Similarly to PRM*, FMT* samples $n$ collision-free nodes. It then builds a minimum-cost spanning tree rooted at the initial configuration over this set of nodes
(see Section~\ref{sec:fmt} for further details).
Lazy variants have been proposed both for PRM* and RRG~\cite{LH14} and for FMT*~\cite{SH14-arxiv}.

An alternative approach to improve the running times of these algorithms is to relax the asymptotic optimality to \emph{asymptotic near-optimality}.
An algorithm is said to be asymptotically near-optimal if, given an \emph{approximation~factor}~$\varepsilon$, the solution obtained by the algorithm converges to within a factor of $(1+\varepsilon)$ of the optimal solution with probability one, as the number of samples tends to infinity.
Similar to this work, yet using different methods, Littlefield et al.~\cite{LLB13} recently presented an asymptotic near-optimal variant of RRT* for systems with dynamics.
Their approach however, requires setting different parameters used by their algorithm.

\vspace{2mm}
\textbf{Anytime and online solutions:}
An interesting variant of the basic motion-planning problem is anytime motion-planning:
In this problem, 
the time to plan is not known in advance, and the algorithm may be terminated at any stage.
Clearly, any solution should be found as fast as possible and if time permits, it should be refined to yield a higher-quality solution.

Ferguson and Stentz~\cite{FS06} suggest iteratively running RRT while considering only areas that may potentially improve the existing solution.
Alterovitz et al.~\cite{APD11} suggest the  Rapidly-exploring Roadmap Algorithm (RRM), which finds an initial path similar to  RRT. 
Once such a path is found, RRM either explores further the configuration space or refines the explored space.
Luna et al.~\cite{LSMK13} suggest alternating between path shortcutting and path hybridization in an anytime fashion.

RRT* was also adapted for online motion planning~\cite{KWPFT11}.
Here, an initial path is computed and the robot begins its execution.
While the robot moves along this path, the algorithm refines the part that the robot has not yet moved along.

\subsection{Contribution}
We present \eRRT, a single-query sampling-based algorithm that is \emph{asymptotically near-optimal}. 
Namely, the solution extracted from \eRRT converges to a solution that is within a factor of $(1+\varepsilon)$ of the optimal solution.
\eRRT allows for interpolating between the fast, yet sub-optimal, RRT algorithm and the asymptotically-optimal RRG algorithm. 
By choosing $\varepsilon = 0$ no approximation is allowed and \eRRT maintains a roadmap identical to the one maintained by RRG.
Choosing $\varepsilon = \infty$ allows for any approximation and \eRRT maintains a tree identical to the tree maintained by RRT.

The asymptotic near-optimality of \eRRT is achieved by simultaneously maintaining two roadmaps. 
Both roadmaps are defined over the same set of vertices but each consists of a different set of edges.
On the one hand, a path in the first roadmaps may not be feasible, but its cost is always a \emph{lower bound} on the cost of paths extracted from RRG (using the same sequence of random nodes).
On the other hand, a path extracted from the second roadmap is always feasible and its cost is within a factor of $(1+\varepsilon)$ from the lower bound provided by the first roadmap.

We suggest to use \eRRTbold for high-quality, anytime motion planning.
We demonstrate its performance on scenarios ranging from 3 to~12 degrees of freedom (DoF) and show that the algorithm produces high-quality solutions (comparable to RRG and RRT*)  with little running-time overhead when compared to RRT.

This paper is a modified and extended version of a publication presented at the 
2014 IEEE International Conference on Robotics and Automation~\cite{SH14}.
In this paper we present additional experiments and 
extensions of the original algorithmic framework.
Finally, we note that the conference version of this paper
contained an oversight with regard to the roadmap that is used for the lower bound.
We explain the problem and its fix in detail in Section~\ref{sec:alg} after providing all the necessary technical background.

\subsection{Outline}
In Section~\ref{sec:background} we review the RRT, RRG and RRT* algorithms. In Section~\ref{sec:alg} we present our algorithm \eRRT and 
a proof of its asymptotic near-optimality.  
We continue in Section~\ref{sec:eval} to demonstrate in simulations its favorable characteristics on several scenarios.
In Section~\ref{sec:extensions} we discuss a modification of the framework to further speed up the convergence to high-quality solutions.
We conclude in Section~\ref{sec:con} by describing possible directions for future work.
\textVersion{}{In the appendix we list several applications where either RRT or RRT* were used and argue that \eRRT may serve as a superior alternative with no fundamental modification to the underlying algorithms using RRT or RRT*. 
Moreover, we discuss alternative implementations of \eRRT using tools developed for either RRT or RRT* that can enhance \eRRT.
Finally, we demonstrate how the framework presented in this paper for relaxing the optimality of RRG can be used to have a similar effect on another asymptotically-optimal sampling-based algorithm, FMT*~\cite{JP13}.
}


\section{Terminology and algorithmic background}
\label{sec:background}
We begin this section by formally stating the motion-planning problem and introducing several standard procedures used by sampling-based algorithms.
We continue by reviewing the RRT, RRG and RRT* algorithms.

\subsection{Problem definition and terminology}
We follow the formulation of the motion-planning problem as 
presented by Karaman and Frazzoli~\cite{KF11}.
Let $\calX$ denote the configuration space (\Cs), 
\Cfree and \Cforb denote the free and forbidden spaces, respectively.
Let $(\Cfree, x_{\text{init}}, \calX_{goal})$ be the motion-planning problem where:
$x_{\text{init}} \in \Cfree$ is an initial free configuration and
$\calX_{goal} \subseteq \Cfree$ is the goal region.
A \emph{collision-free path} $\sigma : [0,1] \rightarrow \Cfree $ is a continuous mapping to the free space.
It is \emph{feasible} if $\sigma(0)\!=\!x_{\text{init}}$ and $\sigma(1)\!\in\!X_{goal}$.

We will make use of the following procedures throughout the paper: 
\texttt{sample\_free}, a procedure returning a random free configuration;
\texttt{nearest\_neighbor}$(x,V)$ and \texttt{nearest\_neighbors}$(x,V,k)$ are procedures returning the nearest neighbor and $k$ nearest neighbors of $x$ within the set~$V$, respectively.
Let \texttt{steer}$(x,y)$ return a configuration $z$ that is closer to $y$ than $x$ is,
\texttt{collision\_free}$(x,y)$ tests if the straight line segment connecting $x$ and $y$ is contained in \Cfree and let
\texttt{cost}$(x,y)$  be a procedure returning the cost of the straight-line  path connecting $x$ and $y$.
Let us denote by $\texttt{cost}_{\calG}(x)$ the minimal cost of reaching a node $x$ from $x_{\text{init}}$ using a roadmap \calG.
These are standard procedures used by the RRT or RRT* algorithms.
Finally, we use the (generic) predicate \texttt{construct\_roadmap} to assess if a stopping criterion has been reached to terminate the algorithm\footnote{A stopping criterion can be, for example, reaching a certain number of samples or exceeding a fixed time budget.}.

\subsection{Algorithmic background}

The RRT, RRG and RRT* algorithms share the same high-level structure. 
They maintain a roadmap as the underlying data structure 
which is a directed tree for RRT and RRT* and a directed graph for RRG.
At each iteration a configuration $x_{\text{rand}}$ is sampled at random. 
Then, $x_{\text{nearest}}$, the nearest configuration to $x_{\text{rand}}$ in the roadmap is found and extended in the direction of $x_{\text{rand}}$ to a new configuration~$x_{\text{new}}$.
If the path between $x_{\text{nearest}}$ and $x_{\text{new}}$ is collision-free, $x_{\text{new}}$ is added to the roadmap
~(see Alg.~\ref{alg_RRT_orig},~\ref{alg_RRG} and~\ref{alg_RRT_star}, lines 3-9).

\begin{algorithm}[t,b]
\caption{RRT ($x_{\text{init}}$ )}
\label{alg_RRT_orig}
\begin{algorithmic}[1]
 \STATE $\calT.V \leftarrow \set{x_{\text{init}}}$
 \WHILE {\texttt{construct\_roadmap()}}
 	\vspace{3mm}

 	\STATE $x_{\text{rand}} \leftarrow \texttt{sample\_free()}$
 	\STATE $x_{\text{nearest}} \leftarrow
 											\texttt{nearest\_neighbor}(	x_{\text{rand}}, \calT.V)$
 	\STATE $x_{\text{new}} \leftarrow
 											\texttt{steer}(	x_{\text{nearest}}, x_{\text{rand}})$
 	\vspace{3mm}
 	
 	\IF {(!\texttt{collision\_free}($x_{\text{nearest}}, x_{\text{new}}$))}
 		\STATE CONTINUE
	\ENDIF
 	\vspace{3mm}
 	
 	\STATE	$\calT.V \leftarrow \calT.V \cup \set{x_{\text{new}}}$
 	\STATE	$\calT.\texttt{parent}(x_{\text{new}}) \leftarrow x_{\text{nearest}}$
 \ENDWHILE
 	
\end{algorithmic}
\end{algorithm}

\begin{algorithm}[t,b]
\caption{RRG ($x_{\text{init}}$ )}
\label{alg_RRG}
\begin{algorithmic}[1]
 \STATE $\calG.V \leftarrow \set{x_{\text{init}}}$
 				\hspace{3mm}
 				$\calG.E \leftarrow \emptyset$
 \WHILE {\texttt{construct\_roadmap()}}
 	\vspace{3mm}

 	\STATE $x_{\text{rand}} \leftarrow \texttt{sample\_free()}$
 	\STATE $x_{\text{nearest}} \leftarrow
 											\texttt{nearest\_neighbor}(	x_{\text{rand}}, \calG.V)$
 	\STATE $x_{\text{new}} \leftarrow
 											\texttt{steer}(	x_{\text{nearest}}, x_{\text{rand}})$
 	\vspace{3mm}
 	
 	\IF {(!\texttt{collision\_free}($x_{\text{nearest}}, x_{\text{new}}$))}
 		\STATE CONTINUE
	\ENDIF
 	\vspace{3mm}
 	
 	\STATE	$\calG.V \leftarrow \calG.V \cup \set{x_{\text{new}}}$
 	\STATE	$\calG.E \leftarrow 
 											\set{(x_{\text{nearest}}, x_{\text{new}}) , (x_{\text{new}}, x_{\text{nearest}})}$
  \vspace{3mm}

  \STATE $X_{\text{near}} \leftarrow \texttt{nearest\_neighbors}(	x_{\text{new}},$\\ 
  			\hspace{35mm}
  			$\calG.V , k_{RRG} \log(|\calG.V |))$
	\FORALL {$(x_{\text{near}}, X_{\text{near}})$}
		\IF {(\texttt{collision\_free}($x_{\text{near}}, x_{\text{new}}$))}
 			\STATE	$\calG.E \leftarrow 
 													\set{(x_{\text{near}}, x_{\text{new}}) , (x_{\text{new}}, x_{\text{near}})}$
		\ENDIF
  \ENDFOR
 \ENDWHILE
 	
\end{algorithmic}
\end{algorithm}

\begin{algorithm}[t,b]
\caption{RRT* ($x_{\text{init}}$ )}
\label{alg_RRT_star}
\begin{algorithmic}[1]
 \STATE $\calT.V \leftarrow \set{x_{\text{init}}}$
 \WHILE {\texttt{construct\_roadmap()}}
 	\vspace{3mm}

 	\STATE $x_{\text{rand}} \leftarrow \texttt{sample\_free()}$
 	\STATE $x_{\text{nearest}} \leftarrow
 											\texttt{nearest\_neighbor}(	x_{\text{rand}}, \calG.V)$
 	\STATE $x_{\text{new}} \leftarrow
 											\texttt{steer}(	x_{nearest}, x_{rand})$
 	\vspace{3mm}
 	
 	\IF {(!\texttt{collision\_free}($x_{\text{nearest}}, x_{\text{new}}$))}
 		\STATE CONTINUE
	\ENDIF
 	\vspace{3mm}
 	
 	\STATE	$\calT.V \leftarrow \calT.V \cup \set{x_{\text{new}}}$
 	\STATE	$\calT.\texttt{parent}(x_{\text{new}}) \leftarrow x_{\text{nearest}}$
  \vspace{3mm}

  \STATE $X_{\text{near}} \leftarrow \texttt{nearest\_neighbors}(	x_{\text{new}},$\\ 
  			\hspace{35mm}
  			$\calT.V , k_{RRG} \log(|\calT.V |))$

	\vspace{3mm}
  
	\FORALL {$(x_{\text{near}}, X_{\text{near}})$}
  	\STATE  \texttt{rewire\_RRT$^*$}($x_{\text{near}}, x_{\text{new}}$ )
  \ENDFOR

	\vspace{3mm}
  
	\FORALL {$(x_{\text{near}}, X_{\text{near}})$}
  	\STATE  \texttt{rewire\_RRT$^*$}($x_{\text{new}}, x_{\text{near}}$ )
  \ENDFOR
 \ENDWHILE
 	
\end{algorithmic}
\end{algorithm}

\begin{algorithm}[h,t, b]
\caption{\texttt{rewire\_RRT$^*$}($x_{\text{potential\_parent}}, x_{\text{child}}$)}
\label{alg_update_rrt_star}
\begin{algorithmic}[1]
  
  \IF {(\texttt{collision\_free}($x_{\text{potential\_parent}}, x_{\text{child}}$))}
  	\STATE	$c \leftarrow$ \texttt{cost}($x_{\text{potential\_parent}}, x_{\text{child}}$)
  	\IF {	($\texttt{\text{cost}}_{\calT}(x_{\text{potential\_parent}}) + c
  						<
  				  \texttt{cost}_{\calT}(x_{\text{child}}))$}      
	    \STATE	$\calT.\texttt{\text{parent}}(x_{\text{child}}) \leftarrow 
    						x_{\text{potential\_parent}}$
    \ENDIF
 	\ENDIF

\end{algorithmic}
\end{algorithm}						

The algorithms differ in the connections added to the roadmap.
In RRT, only the edge $(x_{\text{nearest}}, x_{\text{new}})$ is added.
In RRG and RRT*, a set $X_{\text{near}}$ of $k_{RRG} \log(|V|)$ nearest neighbors of $x_{\text{new}}$ is considered. 
Here, $k_{RRG}$ is a constant ensuring that the cost of paths produced by  RRG and RRT* indeed converges to the optimal cost almost surely as the number of samples grows. 
A valid choice for all problem instances is $k_{RRG} = 2e$~\cite{KF11}. 
For each neighbor $x_{\text{near}} \in X_{\text{near}}$ of $x_{\text{new}}$, RRG  checks if the path between $x_{\text{near}}$ and $x_{\text{new}}$ is collision-free and if so, $(x_{\text{near}} , x_{\text{new}})$ and $(x_{\text{new}}, x_{\text{near}})$ are added to the roadmap (lines 10-13).
RRT* maintains a sub-graph of the RRG roadmap.
This is done by an additional rewiring procedure 
(Alg.~\ref{alg_update_rrt_star})
which is invoked twice: 
The first time, it is used to find the node $x_{\text{near}} \in X_{\text{near}}$ which will minimize the cost to reach $x_{\text{new}}$ 
(Alg.~\ref{alg_RRT_star}, lines 11-12).
The second time, the procedure is used to  to minimize the cost to reach every node $x_{\text{near}} \in X_{\text{near}}$ by considering $x_{\text{new}}$ as its parent (Alg.~\ref{alg_RRT_star}, lines~13-14).
Thus, at all time, RRT* maintains a tree which, as mentioned, is a subgraph of the RRG roadmap.

Given a sequence of $n$ random samples, 
the cost of the path obtained using the RRG algorithm is a lower bound on the cost of the path obtained using the RRT* algorithm.
However, RRG requires both 
additional memory (to explicitly store the set of $O(\log n)$ neighbours)
and
exhibits longer running times (due to the additional calls to the local planner).
In practice, this excess in running time is far from negligible (see Section~\ref{sec:eval}), making RRT* a more suitable algorithm for asymptotically-optimal motion planning. 
\section{Asymptotically near-optimal motion-planning}
\label{sec:alg}
Clearly the asymptotic optimality of the RRT* and RRG algorithms comes at the cost of the additional $O(k_{RRG}\log(|V|))$ calls to the local planner at each stage (and some additional overhead).
If we are not concerned with \emph{asymptotically optimal} solutions, we do not have to consider all of the $k_{RRG} \log(|V|)$ neighbors when a node is added. 
Our idea is to initially only \emph{estimate} the quality of each edge.
We use this estimate of the quality of the edge to decide if to 
discard it, 
use it \emph{without} checking if it is collision-free or 
use it after validating that it is indeed collision-free.
Thus, many calls to the local planner can be avoided, though we still need to estimate the quality of many edges. 
Our approach is viable in cases where such an assessment can be carried out efficiently. 
Namely, more efficiently than deciding if an edge is collision-free.
This condition holds naturally when the quality measure is \emph{path length}
which is the cost function considered in this paper; for a discussion on different cost functions, see Section~\ref{sec:con}.

\subsection{Single-sink shortest-path problem}

As we will see, our algorithm needs to maintain the shortest path from 
$x_{\text{init}}$ to any node in a graph.
Moreover, this graph undergoes a series of edge insertions and edge deletions.
This problem is referred to as the fully dynamic \emph{single-source shortest-path problem} or SSSP for short. 
Efficient algorithms~\cite{FMN00, RR96} exist that can store the minimal cost to reach each node (and the corresponding path) in such settings from a source node.
In our setting, this source node is $x_{\text{init}}$.
We make use of the following procedures which are provided by SSSP algorithms:
\texttt{delete\_edge$_{\text{SSSP}}$}($\calG, (x_1, x_2)$)
and
\texttt{insert\_edge$_{\text{SSSP}}$}($\calG, (x_1, x_2)$) 
which 
delete and 
insert, respectively, the edge $(x_1, x_2)$ from/into the graph $\calG$
while maintaining cost$_\calG$ for each node.
We assume that these procedures return the set of nodes whose cost has changed due to the edge deletion or edge insertion.
Furthermore, let \texttt{parent}$_{\texttt{SSSP}}$($\calG, x$) be a procedure returning the parent of $x$ in the shortest path from the source to~$x$ in $\calG$.

\subsection{\eRRT}
We propose a modification to the RRG algorithm by maintaining two roadmaps $\Glb, \Tub$ simultaneously.
Both roadmaps have the same set of vertices but differ in their edge set.
$\Glb$ is a graph and $\Tub$ is a tree rooted at $x_{\text{init}}$\footnote{The subscript of \Glb is an abbreviation for lower bound and the subscript of \Tub is an abbreviation for approximation.}.

Let $\calG_{RRG}$ be the roadmap constructed by RRG if run on the same sequence of samples used for \eRRT.
The following invariants are maintained by the \eRRT algorithm:\\
\begin{framed}
\textbf{Bounded approximation invariant} - 
For every node $x \in \Tub, \Glb$,
$
	\texttt{cost}_{\Tub}(x) 
			\leq 
	(1+\varepsilon) \cdot \texttt{cost}_{\Glb}(x).
$
\end{framed}
and
\begin{framed}
\textbf{Lower bound invariant} - 
For every node $x \in \Glb$,
$
	\texttt{cost}_{\Glb}(x) \leq \texttt{cost}_{\calG_{RRG}}(x).
$
\end{framed}
The lower bound invariant is maintained by ensuring that the edges of $\calG_{RRG}$ are a subset of the edges of $\Glb$.
As we will see, $\Glb$ may possibly contain some edges that $\calG_{RRG}$ considered but found to be in collision.

The main body of the algorithm (see Alg.~\ref{alg_RRT}) follows the structure of the RRT, RRT* and RRG algorithms with respect to adding a new milestone (lines 3-7) but differs in the connections added.
If a path between the new node $x_{\text{new}}$ and its nearest neighbor $x_{\text{nearest}}$ is indeed collision-free, it is added to both roadmaps together with an  edge from $x_{\text{nearest}}$ to $x_{\text{new}}$ (lines~8-11).

Similar to RRG and RRT*, \eRRT locates the set $X_{\text{near}}$ of $k_{RRG}\log(|V|)$ nearest neighbors of $x_{\text{new}}$ (line~12).
Then, 
for each edge connecting a node from $X_{\text{near}}$ to $x_{\text{new}}$  and
for each edge connecting $x_{\text{new}}$ to a node from $X_{\text{near}}$,  it uses a procedure 
\texttt{consider\_edge}
(Alg.~\ref{alg_update}) to assess if the edge should be inserted to either roadmaps. 
The edge is first lazily inserted into \Glb without checking if it is collision-free.
This \emph{may} cause the bounded approximation invariant to be violated, which in turn will induce a call to the local planner for a set of edges.
Each such edge might either be inserted into \Tub or removed from \Glb.

This is done as follows,
first, the edge considered is inserted to $\Glb$ while updating the shortest path to reach each vertex in  $\Glb$ (Alg.~\ref{alg_update}, line~1).
Denote by $I$ the set of updated vertices after the edge insertion.
Namely, for every $x \in I$, \texttt{cost}$_{\Glb}(x)$
has decreased due to the edge insertion.
This cost decrease may, in turn, cause the bounded approximation invariant to be violated for some nodes in~$U$.
All such nodes are collected and inserted into a priority queue~$Q$ (line 2)
ordered according to $\texttt{cost}_{\Glb}$ from low to high.
Now, the algorithm proceeds in iterations until the queue is empty (lines~3-15).
At each iteration, the head of the queue $x$ is considered (line~4). 
If the bounded approximation invariant does not hold (line~5), 
the algorithm checks if the edge in \Glb connecting the node~$x$ to its parent along the shortest path to $x_{\text{init}}$ is collision free (lines~6-7).
If this is the case, 
the approximation tree is updated (line~8) and the head of the queue is removed (line~9).
If not, 
the edge is removed from \Glb (line~11).
This causes an increase in cost$_{\text{\Glb}}$ for a set $D$ of nodes, 
some of which are already in the priority queue.
Clearly, the bounded approximation invariant holds for the nodes $x \in D$ that are not in the priority queue. 
Thus, we take only the nodes $x \in D$ that are already in $Q$ and update their location in~$Q$ according to their new cost (lines~12-13) .
Finally,  if the bounded approximation invariant holds for $x$ then it is removed from the queue (lines~15).


\begin{algorithm}[t,b]
\caption{\eRRT ($x_{\text{init}}, \varepsilon$ )}
\label{alg_RRT}
\begin{algorithmic}[1]
 \STATE $\Tlb.G \leftarrow \set{x_{\text{init}}}$
 				\hspace{2mm}
 		$\Tub.V \leftarrow \set{x_{\text{init}}}$
 \WHILE {\texttt{construct\_roadmap()}}
 	\textVersion{\vspace{2mm}}{\vspace{3mm}}

 	\STATE $x_{\text{rand}} \leftarrow \texttt{sample\_free()}$
 	\STATE $x_{\text{nearest}} \leftarrow
 											\texttt{nearest\_neighbor}(	x_{rand}, \Tlb.V)$
 	\STATE $x_{\text{new}} \leftarrow
 											\texttt{steer}(	x_{\text{nearest}}, x_{\text{rand}})$
 	\textVersion{\vspace{2mm}}{\vspace{3mm}}
 	
 	\IF {(!\texttt{collision\_free}($x_{\text{nearest}}, x_{\text{new}}$))}
 		\STATE CONTINUE
	\ENDIF
 	\textVersion{\vspace{2mm}}{\vspace{3mm}}

 	\STATE	$\Tub.V \leftarrow \Tub.V \cup \set{x_{\text{new}}}$ 	
 	\STATE	$\Tub.\texttt{parent}(x_{\text{new}}) \leftarrow x_{\text{nearest}}$
 	
 	\textVersion{\vspace{2mm}}{\vspace{3mm}}
 	
 	\STATE	$\Glb.V \leftarrow \Glb.V \cup \set{x_{\text{new}}}$
	\STATE 	\texttt{insert\_edge$_{\text{SSSP}}$}($\Glb, (x_{\text{nearest}}, x_{\text{new}})$)

 	\textVersion{\vspace{2mm}}{\vspace{3mm}}

	\STATE $X_{\text{near}} \leftarrow \texttt{nearest\_neighbors}(	x_{\text{new}},$\\ 
  			\hspace{35mm}
  			$\Glb.V , k_{RRG} \log(|\Glb.V |))$
  			
 	\textVersion{\vspace{2mm}}{\vspace{3mm}}

	\FORALL {$(x_{\text{near}}, X_{\text{near}})$}
		\STATE \texttt{consider\_edge}$(x_{\text{near}}, x_{\text{new}})$
	\ENDFOR

 	\textVersion{\vspace{2mm}}{\vspace{3mm}}

	\FORALL {$(x_{\text{near}}, X_{\text{near}})$}
		\STATE \texttt{consider\_edge}$(x_{\text{new}}, x_{\text{near}})$
	\ENDFOR

 \ENDWHILE
 	
\end{algorithmic}
\end{algorithm}

\begin{algorithm}[t,b]
\caption{\texttt{consider\_edge}($x_{1}, x_{2}$)}
\label{alg_update}
\begin{algorithmic}[1]
	\STATE $I \leftarrow $\texttt{insert\_edge$_{\text{SSSP}}$}($\Glb, (x_{1}, x_{2})$)
	\STATE $Q \leftarrow \set{ x \in I \ | \
 		\text{cost}_{\Tub}(x) > (1 + \varepsilon) \cdot \text{cost}_{\Glb}(x) }$
	\WHILE {$Q \neq \emptyset$}
		\STATE $x \leftarrow Q.\texttt{top}()$; 
		\IF {$\text{cost}_{\Tub}(x) > (1 + \varepsilon) \cdot \text{cost}_{\Glb}(x)$}
			\STATE $x_{parent} \leftarrow \texttt{parent}_{\texttt{SSSP}}(\Glb, x)$	

	 		\IF {(\texttt{collision\_free}
 				($x_{parent}, x$))}
				\STATE	$\Tub.\texttt{\text{parent}}(x) \leftarrow  x_{\text{parent}}$  
	   			\STATE	$Q.\texttt{pop}()$
   	       	\ELSE
   				\STATE $D \leftarrow $\texttt{delete\_edge$_{\text{SSSP}}$}($\Glb, (x_{\text{parent}}, x)$)
				\FORALL {$y \in D \cap Q $}
					\STATE $Q$.\texttt{update\_cost}$(y)$
				\ENDFOR
   			\ENDIF
   		\ELSE
   			\STATE	$Q.\texttt{pop}()$
       	\ENDIF
	\ENDWHILE
\end{algorithmic}
\end{algorithm}

\subsection{Analysis}
\label{susbsec:analysis}
In this section we show that Alg.~\ref{alg_RRT} maintains the lower bound invariant (Corollary~\ref{cor_lb}) and 
that after every iteration of the algorithm
the bounded approximation invariant is maintained (Lemma~\ref{lem_invariant}).
We then report on the time complexity of the algorithm (Corollary~\ref{cor_complex}).

We note the following straightforward, yet helpful observations comparing \eRRT and RRG when run on the same sequence of random samples:
\begin{obs}
\label{obs:1}
A node $x$ is added to $\Glb$ and to $\Tub$ if and only if $x$ is added to $\calG_{RRG}$ 
(Alg.~\ref{alg_RRG} lines 3-8 and~\ref{alg_RRT}, lines~3-11). 

\end{obs}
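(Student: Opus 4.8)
The plan is to prove the statement by induction on the iteration number of the main \texttt{while} loop, establishing the stronger invariant that at the \emph{start} of every iteration the three vertex sets $\calG_{RRG}.V$, $\Glb.V$ and $\Tub.V$ all coincide. Since \eRRT (Alg.~\ref{alg_RRT}) and RRG (Alg.~\ref{alg_RRG}) are driven by the same sequence of random samples, this common vertex-set invariant is precisely what the ``if and only if'' asserts: a configuration enters one of the three sets exactly when it enters all of them.

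For the base case, all three structures are initialized to $\set{x_{\text{init}}}$ (Alg.~\ref{alg_RRG}, line~1; Alg.~\ref{alg_RRT}, line~1), so the invariant holds before the first iteration; recall also that, by construction, $\Glb$ and $\Tub$ always share their vertex set. For the inductive step, assume the three vertex sets agree at the start of an iteration. Both algorithms first draw the same $x_{\text{rand}}$ (line~3 in each). The nearest-neighbor query on line~4 is then evaluated over identical vertex sets, so it returns the same $x_{\text{nearest}}$ in both algorithms; since \texttt{steer} is deterministic in its inputs, line~5 produces the same $x_{\text{new}}$, and hence the test \texttt{collision\_free}$(x_{\text{nearest}}, x_{\text{new}})$ on line~6 has the same outcome. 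If the test fails, both algorithms execute \texttt{CONTINUE} and add no vertex; if it succeeds, RRG adds $x_{\text{new}}$ to $\calG_{RRG}.V$ (line~8), while \eRRT adds the very same $x_{\text{new}}$ to both $\Tub.V$ and $\Glb.V$ (lines~8 and~10). Either way the three vertex sets remain equal.

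The one point that needs checking is that no configuration is inserted into any vertex set \emph{outside} this milestone-insertion block, so that the equality is not disturbed later in the iteration. In RRG the remaining work (lines~10--13) only adds \emph{edges} between existing vertices. In \eRRT the remaining work is delegated to \texttt{consider\_edge} (Alg.~\ref{alg_update}), which invokes only \texttt{insert\_edge}$_{\text{SSSP}}$, \texttt{delete\_edge}$_{\text{SSSP}}$ and a reassignment of $\Tub$-parents --- all operations that act on edges (and the associated shortest-path bookkeeping) and never enlarge the vertex set. Consequently the invariant survives the full iteration. I expect this edge-versus-vertex bookkeeping to be the only, and rather mild, obstacle; the rest follows immediately from the fact that, once the vertex sets are forced to coincide, both algorithms make identical sampling, nearest-neighbor, steering and collision-check decisions.
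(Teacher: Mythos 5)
Your proof is correct and matches the paper's reasoning: the paper treats this observation as immediate from the line-by-line correspondence between Alg.~\ref{alg_RRG} (lines 3--8) and Alg.~\ref{alg_RRT} (lines 3--11) when both are run on the same sample sequence, which is exactly the content of your induction (identical sampling, nearest-neighbor, steering, and collision-check decisions, plus the check that the remainder of each iteration adds only edges). Your write-up simply makes explicit the induction the paper leaves implicit.
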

\begin{obs}
Both \eRRT and RRG consider the same set of $k_{RRG}\log(|V|)$ nearest neighbors of~$x_{\text{new}}$~(Alg.~\ref{alg_RRG}, line~10 and Alg.~\ref{alg_RRT}, line~12).

\end{obs}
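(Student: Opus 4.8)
The plan is to prove the observation by a coupling argument: run \eRRT and RRG in lockstep on the same sequence of random samples and argue, by induction on the iteration count, that at the moment the neighbor set $X_{\text{near}}$ is computed both algorithms supply \emph{identical} arguments to the \texttt{nearest\_neighbors} procedure. Since that procedure is deterministic---returning the same set of $k$ closest vertices given the same query point, the same vertex set, and the same $k$, with ties broken by a fixed rule---the two returned sets must coincide.

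First I would fix an arbitrary iteration and assume inductively that, up to its start, both algorithms have processed the shared samples identically. By Observation~\ref{obs:1}, a node enters $\Glb$ (and $\Tub$) exactly when it enters $\calG_{RRG}$, so the vertex sets satisfy $\Glb.V = \calG_{RRG}.V$ at the start of the iteration. Both algorithms then draw the same $x_{\text{rand}}$. Because the vertex sets agree, \texttt{nearest\_neighbor}$(x_{\text{rand}}, \cdot)$ returns the same $x_{\text{nearest}}$, hence \texttt{steer}$(x_{\text{nearest}}, x_{\text{rand}})$ yields the same $x_{\text{new}}$, and the \texttt{collision\_free} test on $(x_{\text{nearest}}, x_{\text{new}})$ has the same outcome in both; thus either both skip the iteration or both add $x_{\text{new}}$. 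In the latter case, immediately before line~10 of Alg.~\ref{alg_RRG} and line~12 of Alg.~\ref{alg_RRT}, the new vertex has been inserted into both graphs, so $\Glb.V = \calG_{RRG}.V$ still holds, now including $x_{\text{new}}$. Both calls are therefore \texttt{nearest\_neighbors}$(x_{\text{new}}, V, k_{RRG}\log(|V|))$ with the same $x_{\text{new}}$, the same $V$, and the same size parameter, and so return the identical set $X_{\text{near}}$, closing the induction.

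The only delicate point is that the conclusion hinges on the determinism of \texttt{nearest\_neighbor}, \texttt{steer}, and \texttt{nearest\_neighbors}: ties among equidistant candidates must be resolved by a fixed, input-independent rule so that the two executions cannot diverge on a boundary case. I would state this determinism explicitly as a standing assumption, which is already implicit in treating these as shared subroutines of both algorithms. I note that the genuine content here is precisely the vertex-set equality $\Glb.V = \calG_{RRG}.V$, which is supplied by Observation~\ref{obs:1}; once that is in hand, the remainder is the trivial fact that a deterministic routine applied to identical inputs produces identical outputs, which is why the statement is flagged as straightforward. Strictly speaking the two observations are established by a single joint induction, with Observation~\ref{obs:1} handling the vertex bookkeeping and the present observation providing the neighbor-set agreement that the subsequent analysis relies on.
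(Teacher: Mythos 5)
Your proposal is correct and takes essentially the same route as the paper: the paper treats this observation as immediate from the lockstep correspondence of the two algorithms on a shared sample sequence, citing only the parallel lines (Alg.~\ref{alg_RRG}, line~10 and Alg.~\ref{alg_RRT}, line~12), and your coupling induction---vertex sets agree by Observation~\ref{obs:1}, hence the same $x_{\text{rand}}$, $x_{\text{nearest}}$, $x_{\text{new}}$, and collision outcome, hence identical arguments to \texttt{nearest\_neighbors}---is exactly that argument made explicit. Your added remark on deterministic tie-breaking is a reasonable standing assumption that the paper leaves implicit.
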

\begin{obs}
Every edge added to the RRG roadmap 
(Alg.~\ref{alg_RRG} line 13)
is added to $\Glb$
(Alg.~\ref{alg_RRT} lines 14, 16 and Alg.~\ref{alg_update} line 1).
\end{obs}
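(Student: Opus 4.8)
The plan is to prove the statement in two parts: that every oriented edge inserted by RRG in line~13 of Alg.~\ref{alg_RRG} is inserted into $\Glb$ at the same iteration of \eRRT, and that once inserted it is never deleted, so that it remains an edge of $\Glb$ for the rest of the run.

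First I would align the two executions. By Observation~\ref{obs:1} the two algorithms add the same nodes, and by the second observation above they examine the same neighbour set $X_{\text{near}}$ of each $x_{\text{new}}$. Hence, whenever RRG tests a pair $(x_{\text{near}},x_{\text{new}})$ for collision in line~12 of Alg.~\ref{alg_RRG}, \eRRT calls \texttt{consider\_edge}$(x_{\text{near}},x_{\text{new}})$ in line~14 and \texttt{consider\_edge}$(x_{\text{new}},x_{\text{near}})$ in line~16 of Alg.~\ref{alg_RRT}, covering both orientations that RRG would add in line~13. Since line~1 of \texttt{consider\_edge} (Alg.~\ref{alg_update}) calls \texttt{insert\_edge$_{\text{SSSP}}$} \emph{unconditionally}, i.e.\ without consulting \texttt{collision\_free}, each such oriented edge is inserted into $\Glb$ regardless of RRG's collision test; in particular every edge that RRG does add is inserted into $\Glb$.

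The main obstacle is persistence: I must rule out that such an edge is later removed. For this I would observe that the sole statement in the entire algorithm that deletes an edge from $\Glb$ is line~11 of \texttt{consider\_edge}, which is reached only through the \texttt{else} branch of the test in line~7, i.e.\ only when \texttt{collision\_free}$(x_{\text{parent}},x)$ returns \emph{false}. Thus only edges that are \emph{not} collision-free are ever deleted. Any edge RRG inserts in line~13 passed the collision test in line~12, so it is collision-free; because \texttt{collision\_free} is a deterministic predicate of the edge alone, it will again evaluate to \emph{true} in every later iteration and the deletion guard can never fire on it. Consequently the edge stays in $\Glb$ permanently, which together with the insertion argument establishes the observation, and hence the inclusion $\calG_{RRG}.E \subseteq \Glb.E$ on which the lower-bound invariant rests. (The nearest-neighbour edge of line~9 of Alg.~\ref{alg_RRG} is handled by the same reasoning: \eRRT inserts it directly in line~11 of Alg.~\ref{alg_RRT}, and it is collision-free by the test in line~6, so it too is never deleted.)
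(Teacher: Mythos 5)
Your proof is correct and takes essentially the route the paper takes: the paper offers no explicit proof of this observation, treating it as immediate from exactly the correspondence you spell out---the two executions add the same nodes (Observation~\ref{obs:1}) and examine the same neighbour sets, and line~1 of \texttt{consider\_edge} (Alg.~\ref{alg_update}) inserts every considered edge into \Glb unconditionally, irrespective of any collision test. Your persistence step is a sound addition that the paper leaves implicit: the only deletion from \Glb (Alg.~\ref{alg_update}, line~11) is guarded by a failed collision test, so the collision-free edges of $\calG_{RRG}$ can never be removed; this is precisely what makes the edge set of $\calG_{RRG}$ a subset of that of \Glb at all times, which is what Corollary~\ref{cor_lb} actually relies on.
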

Note that some additional edges may be added to $\Glb$ which are not added to the RRG roadmap as they are not collision-free.

\begin{obs}
\label{obs:4}
Every edge of $\Tub$ is collision free~(Alg.~\ref{alg_RRT}, line 9 and Alg.~\ref{alg_update}, line~8).
\end{obs}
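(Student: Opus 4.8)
The plan is to prove the observation as an invariant preserved throughout the execution: after every statement of the algorithm, every edge currently present in $\Tub$ is collision-free. Since $\Tub$ is a tree rooted at $x_{\text{init}}$, each of its edges is in one-to-one correspondence with a parent pointer, so it suffices to examine every statement that assigns such a pointer and to check that the edge it creates has been certified collision-free immediately beforehand. The initialization (Alg.~\ref{alg_RRT}, line~1) sets $\Tub.V \leftarrow \set{x_{\text{init}}}$ and creates no edge, so the invariant holds vacuously at the start.

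First I would argue that there are exactly two statements that assign a parent pointer of $\Tub$, and hence exactly two ways an edge enters the tree. The first is Alg.~\ref{alg_RRT}, line~9, where $\Tub.\texttt{parent}(x_{\text{new}})$ is set to $x_{\text{nearest}}$; reaching this line requires passing the guard on lines~6--7, which executes \texttt{CONTINUE} and skips the iteration unless $\texttt{collision\_free}(x_{\text{nearest}}, x_{\text{new}})$ returns true, so the inserted edge $(x_{\text{nearest}}, x_{\text{new}})$ is collision-free. The second is Alg.~\ref{alg_update}, line~8, where $\Tub.\texttt{parent}(x)$ is set to $x_{\text{parent}}$; this statement lies inside the conditional opened on line~7 and is therefore reached only when $\texttt{collision\_free}(x_{\text{parent}}, x)$ returns true, so the inserted edge $(x_{\text{parent}}, x)$ is collision-free as well. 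No other statement creates a $\Tub$ edge: in particular $\Tub.V$ is enlarged only by the isolated-vertex insertion on Alg.~\ref{alg_RRT}, line~8.

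The one point requiring care --- more a matter of bookkeeping than of difficulty --- is that the assignment on line~8 of Alg.~\ref{alg_update} is a \emph{rewiring}: it overwrites the previous parent of $x$, which implicitly deletes the old tree edge entering $x$ and inserts a new one. Deleting an edge cannot falsify the invariant, since it only shrinks the edge set, and the single newly inserted edge was just certified collision-free; hence the invariant is preserved across this operation too. Combining the vacuous base case with the fact that both edge-creating statements insert only certified collision-free edges, the invariant holds after every statement, and in particular at termination, which is exactly the claim. I expect no genuine obstacle here; the entire content of the proof is the exhaustiveness of the case analysis, i.e., the claim that line~9 of Alg.~\ref{alg_RRT} and line~8 of Alg.~\ref{alg_update} are the only statements that create tree edges, together with treating rewiring as a delete-then-insert.
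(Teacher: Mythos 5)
Your proof is correct and follows exactly the route the paper takes: the observation's parenthetical citation of Alg.~\ref{alg_RRT}, line~9 and Alg.~\ref{alg_update}, line~8 \emph{is} the paper's entire argument, namely that these are the only two statements creating edges of \Tub and both are guarded by a \texttt{collision\_free} check. Your additional bookkeeping (the vacuous base case and treating the rewiring on Alg.~\ref{alg_update}, line~8 as a delete-then-insert, which can only shrink the set of edges needing certification) just makes explicit what the paper leaves implicit.
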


Thus, the following corollary trivially holds:
\begin{cor}
\label{cor_lb}
	After every iteration of \eRRT (Alg.~\ref{alg_RRT}, lines 3-16) the lower 
	bound invariant is maintained.
\end{cor}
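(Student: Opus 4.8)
The plan is to derive the cost inequality $\texttt{cost}_{\Glb}(x) \leq \texttt{cost}_{\calG_{RRG}}(x)$ from a purely structural containment between the two roadmaps, rather than reasoning about shortest-path costs directly. Concretely, I would first show that after every iteration the graphs $\Glb$ and $\calG_{RRG}$ share the same vertex set and that every edge of $\calG_{RRG}$ is also an edge of $\Glb$, i.e. $E(\calG_{RRG}) \subseteq E(\Glb)$. Once this containment is in place the corollary follows immediately: any directed path from $x_{\text{init}}$ to $x$ in $\calG_{RRG}$ uses only vertices and edges that are also present in $\Glb$, so it is a feasible path in $\Glb$ of the very same cost. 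Taking the minimum over all such paths then yields $\texttt{cost}_{\Glb}(x) \leq \texttt{cost}_{\calG_{RRG}}(x)$, which is exactly the lower bound invariant.

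To establish the containment I would argue by induction on the iterations of the main while loop (Alg.~\ref{alg_RRT}, lines~2--16), both algorithms being run on the same sequence of random samples. The vertex sets agree by Observation~\ref{obs:1}, since a node enters $\Glb$ in precisely the same iteration in which it enters $\calG_{RRG}$. For the edge additions I would invoke the earlier observation that every edge RRG adds to its roadmap (Alg.~\ref{alg_RRG}, line~13) is also inserted into $\Glb$ (Alg.~\ref{alg_RRT}, lines~14 and~16, and Alg.~\ref{alg_update}, line~1); combined with the inductive hypothesis, this shows that each edge of $\calG_{RRG}$ created during the current iteration is indeed placed into $\Glb$.

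The step I expect to be the crux --- and the one responsible for the oversight in the conference version alluded to in the introduction --- is the treatment of edge \emph{deletions}, since $\Glb$ is a fully dynamic graph whereas $\calG_{RRG}$ is only ever grown. Here I would verify that no edge belonging to $\calG_{RRG}$ is ever removed from $\Glb$. The only deletion takes place in \texttt{consider\_edge} (Alg.~\ref{alg_update}, line~11), and it is triggered exactly when the collision check on the single edge $(x_{\text{parent}}, x)$ fails. Such an edge is not collision-free, so RRG would have discarded it, and it can therefore never appear in $\calG_{RRG}$. Consequently deletions touch only edges of $E(\Glb) \setminus E(\calG_{RRG})$, and the containment $E(\calG_{RRG}) \subseteq E(\Glb)$ is preserved across the iteration. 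The care needed is to confirm that the deleted edge is precisely the one whose collision check failed --- and not an edge that $\calG_{RRG}$ had legitimately admitted along some other path --- which holds because the lazily inserted, possibly colliding edges are the only ones \texttt{consider\_edge} can ever remove.
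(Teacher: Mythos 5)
Your proposal is correct and takes essentially the same route as the paper: the paper's proof amounts to exactly the containment $E(\calG_{RRG}) \subseteq E(\Glb)$ (via its observations that \eRRT and RRG, run on the same samples, add the same vertices and that every edge added to the RRG roadmap is also inserted into \Glb), from which $\texttt{cost}_{\Glb}(x) \leq \texttt{cost}_{\calG_{RRG}}(x)$ is immediate. If anything, you are more explicit than the paper's ``trivially holds'' argument on the one delicate point---that the deletions performed in \texttt{consider\_edge} can only remove edges that failed a collision check, hence never edges of $\calG_{RRG}$---a point the paper leaves implicit in its remark that \Glb may contain edges that RRG ``considered but found to be in collision.''
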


We continue with the following observations relevant to the analysis of the procedure 
\texttt{consider\_edge}($x_{1}, x_{2}$):
\begin{obs}
\label{obs:5}
The only place where cost$_{\Glb}$ is decreased is during a call to
\texttt{insert\_edge$_{\text{SSSP}}$}($\Glb, (x_{1}, x_{2})$ (Alg.~\ref{alg_update}, line~1).
\end{obs}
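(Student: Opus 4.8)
The plan is to prove this purely structurally, by exhaustively enumerating every statement inside \texttt{consider\_edge} that can modify the graph \Glb, and then invoking the monotonicity of single-source shortest-path distances under edge insertions and deletions. First I would observe that within \texttt{consider\_edge} (Alg.~\ref{alg_update}) there are exactly two statements that change the edge set of \Glb: the insertion \texttt{insert\_edge$_{\text{SSSP}}$}($\Glb, (x_{1}, x_{2})$) on line~1 and the deletion \texttt{delete\_edge$_{\text{SSSP}}$}($\Glb, (x_{\text{parent}}, x)$) on line~11. Every other reference to \Glb in the procedure — the evaluations of $\text{cost}_{\Glb}$ in the queue-membership test (line~2) and the invariant test (line~5), and the call to \texttt{parent}$_{\texttt{SSSP}}$ on line~6 — only \emph{reads} the data structure and hence cannot alter any maintained cost. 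Since, by the assumption stated earlier in this section, the SSSP procedures faithfully maintain $\text{cost}_{\Glb}(x)$ as the true minimal cost of reaching $x$ from $x_{\text{init}}$ after each operation, it suffices to determine the effect of just these two statements on the maintained costs.

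The key fact I would use is the monotonicity of shortest-path distances with respect to the edge set. Concretely, if $\calG$ and $\calG'$ are graphs over the same vertex set with $E(\calG) \subseteq E(\calG')$, then $\text{cost}_{\calG'}(x) \le \text{cost}_{\calG}(x)$ for every vertex $x$: any directed path from $x_{\text{init}}$ to $x$ that is available in $\calG$ is also available in $\calG'$, so enlarging the edge set can only introduce additional candidate paths and can therefore only weakly decrease the minimum. Applying this to the insertion on line~1, which enlarges $E(\Glb)$, yields that $\text{cost}_{\Glb}$ can only decrease (or stay equal) there. Applying the same fact in the opposite direction to the deletion on line~11, which removes an edge from $E(\Glb)$, yields that $\text{cost}_{\Glb}$ can only increase (or stay equal) there — indeed this is precisely the cost increase on the set $D$ that the algorithm subsequently reacts to in lines~12-13.

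Putting these together, the only statement inside \texttt{consider\_edge} at which $\text{cost}_{\Glb}$ can possibly decrease is the insertion on line~1, which is exactly the claim. I do not expect any genuine obstacle here: the statement is essentially a bookkeeping observation about which lines touch \Glb. The only points to handle with a little care are (i) confirming that no other line mutates the edge set of \Glb, and (ii) phrasing the monotonicity argument to cover the weak (non-strict) cases, since an inserted edge need not lie on any improved shortest path and a deleted edge need not have lain on any shortest path; both follow immediately from the SSSP semantics assumed above.
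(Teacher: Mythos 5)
Your proposal is correct and matches the paper's reasoning: the paper states this as an unproved, ``straightforward'' observation, and the implicit justification is exactly your argument---by inspection, lines~1 and~11 of Alg.~\ref{alg_update} are the only statements mutating \Glb, and monotonicity of shortest-path costs under edge insertion/deletion rules out any decrease at the deletion on line~11. Your explicit handling of the weak (non-strict) cases and of the read-only references to \Glb is a faithful spelling-out of what the paper leaves to the reader.
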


\begin{obs}
\label{obs:7}
A node~$x$ is removed from the queue~$Q$ (Alg~\ref{alg_update}, lines~9,15) only if
the bounded approximation invariant holds for $x$.
\end{obs}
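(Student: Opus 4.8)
The plan is to treat the two spots where a node leaves $Q$ separately and to feed both from a single loop invariant: \emph{at the start of every iteration of the while loop the bounded approximation invariant holds for every vertex that is not currently in} $Q$. The pop at line~15 is immediate, since a node reaches line~15 only when the test of line~5 fails, which is literally the statement that $\text{cost}_{\Tub}(x) \le (1+\varepsilon)\,\text{cost}_{\Glb}(x)$. All the content is therefore in the pop at line~9, which is always preceded by the reparenting $\Tub.\texttt{parent}(x) \leftarrow x_{\text{parent}}$ at line~8; I must show that this reparenting re-establishes the invariant for $x$.

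For that pop I would first record the two cost identities in force at line~8. Because $x_{\text{parent}} = \texttt{parent}_{\texttt{SSSP}}(\Glb,x)$ sits on the shortest path to $x$ inside $\Glb$, we have $\text{cost}_{\Glb}(x) = \text{cost}_{\Glb}(x_{\text{parent}}) + \texttt{cost}(x_{\text{parent}},x)$, and line~8 makes $\text{cost}_{\Tub}(x) = \text{cost}_{\Tub}(x_{\text{parent}}) + \texttt{cost}(x_{\text{parent}},x)$. A single substitution then shows that \emph{if} the invariant holds for $x_{\text{parent}}$, i.e.\ $\text{cost}_{\Tub}(x_{\text{parent}}) \le (1+\varepsilon)\,\text{cost}_{\Glb}(x_{\text{parent}})$, then, using $\varepsilon \ge 0$ to absorb the surplus $\varepsilon\cdot\texttt{cost}(x_{\text{parent}},x)$, the invariant holds for $x$. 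So the entire burden reduces to showing that, at the instant line~8 fires, the invariant already holds for $x_{\text{parent}}$.

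This is where the ordering of $Q$ and the loop invariant combine, and it is the main obstacle. Since $Q$ is keyed by $\text{cost}_{\Glb}$ from low to high and $x=Q.\texttt{top}()$ realizes the current minimum, and since the straight-line cost between two distinct configurations is strictly positive, the identity above gives $\text{cost}_{\Glb}(x_{\text{parent}}) < \text{cost}_{\Glb}(x)$ and hence $x_{\text{parent}} \notin Q$; the loop invariant then supplies the invariant for $x_{\text{parent}}$, closing the line~9 case. It remains to verify the loop invariant by induction over the iterations. For the base case I would use that, by the precondition that the invariant held when \texttt{consider\_edge} was entered (this is exactly what Lemma~\ref{lem_invariant} records across the main loop), together with Observation~\ref{obs:5}: the insertion at line~1 only lowers $\text{cost}_{\Glb}$ on the returned set, and line~2 places in $Q$ precisely the vertices that thereby violate the invariant, so every vertex outside $Q$ still satisfies it. For the inductive step I would check the three branches of one iteration: the line~15 branch pops an $x$ that already satisfies the invariant; the collision-free branch lowers $\text{cost}_{\Tub}$ on $x$ and its $\Tub$-descendants (a decrease only helps the invariant) and pops $x$ already satisfying it; and the in-collision branch at lines~11--13 only raises $\text{cost}_{\Glb}$ on $D$ (an increase only helps the invariant), pops nothing, and re-keys the members of $D \cap Q$ so that $Q$ stays correctly ordered. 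The two facts that make each branch safe are that reparenting in $\Tub$ never raises any $\Tub$-cost and that edge deletion in $\Glb$ never lowers any $\Glb$-cost, so no vertex sitting outside $Q$ can have its invariant broken. One subtle point worth checking along the way is that the reparenting at line~8 introduces no cycle in $\Tub$, which follows because a $\Tub$-descendant of $x$ would, through the invariant at $x_{\text{parent}}$ and the violation at $x$, yield the contradiction $\text{cost}_{\Tub}(x_{\text{parent}}) > \text{cost}_{\Tub}(x) > (1+\varepsilon)\,\text{cost}_{\Glb}(x) > (1+\varepsilon)\,\text{cost}_{\Glb}(x_{\text{parent}}) \ge \text{cost}_{\Tub}(x_{\text{parent}})$.
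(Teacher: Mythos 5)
Your proof is correct and follows essentially the same route as the paper: the paper takes the line-15 pop as immediate from the failed test and places the real argument for the line-9 pop inside the proof of Lemma~\ref{lem_invariant}, using exactly your reasoning---$Q$ is keyed by $\texttt{cost}_{\Glb}$, so $x_{\text{parent}}$ cannot be in $Q$, the invariant therefore holds for $x_{\text{parent}}$, and the two cost identities transfer it to $x$ after the reparenting at line~8. Your explicit loop invariant (every vertex outside $Q$ satisfies the invariant) and the acyclicity check for $\Tub$ merely spell out steps the paper leaves implicit.
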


Showing that the bounded approximation invariant is maintained is done by 
induction on the number of calls to \texttt{consider\_edge}($x_{1}, x_{2}$).
Using Obs.~\ref{obs:5}, prior to the first call to  \texttt{consider\_edge}($x_{1}, x_{2}$) the bounded approximation invariant is maintained.
Thus, we need to show that:
\begin{lem}
\label{lem_invariant}
	If the bounded approximation invariant holds prior to a call to the procedure \texttt{consider\_edge}($x_{1}, x_{2}$) (Alg.~\ref{alg_update}), then the procedure will terminate with the invariant maintained.
\end{lem}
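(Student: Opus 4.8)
The plan is to treat \texttt{consider\_edge} as a Dijkstra-like sweep over \Glb and to establish, by induction on the iterations of the \textbf{while} loop (Alg.~\ref{alg_update}, lines~3--15), a single \emph{loop invariant}: at the top of every iteration, every node that is \emph{not} currently in the queue $Q$ satisfies the bounded approximation invariant. Once this loop invariant is in hand the lemma is immediate, since the loop exits only when $Q=\emptyset$, at which point \emph{all} nodes are outside $Q$ and therefore satisfy the invariant. Before the analysis I would also note, using the lower bound invariant, that $\texttt{cost}_{\Glb}$ stays finite throughout the call: the only edges ever deleted (line~11) are those found to be in collision, hence they are never edges of $\calG_{RRG}$, whose (collision-free) edges keep every node reachable from $x_{\text{init}}$ in \Glb.

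First I would argue termination, since the loop is not obviously finite: in the collision branch (line~11) no node is popped, so $|Q|$ need not decrease. The key accounting observation is that no node is ever \emph{inserted} into $Q$ after its initialization in line~2, so every iteration reaching a pop (line~9 or line~15) removes a node permanently; there are at most $|Q|$ such iterations. The remaining iterations are exactly those deleting an edge in line~11, and since edges are never re-inserted and \Glb has finitely many edges, there can be only finitely many of these. Hence the loop terminates.

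For the loop invariant itself, the base case is handled by Obs.~\ref{obs:5}: before the loop the only change is the \emph{decrease} of $\texttt{cost}_{\Glb}$ on the set $I$, so exactly the nodes placed in $Q$ in line~2 can be in violation and every other node still satisfies the invariant. For the inductive step I would split on the three branches. If the invariant already holds for the popped node $x$ (line~15), removing it is harmless. In the rerouting branch (line~8) the crucial point is that $x_{\text{parent}}=\texttt{parent}_{\texttt{SSSP}}(\Glb,x)$ has strictly smaller $\texttt{cost}_{\Glb}$ than $x$ (edge costs are positive) while $x$ is the minimum-key element of $Q$; therefore $x_{\text{parent}}\notin Q$ and by the induction hypothesis satisfies $\texttt{cost}_{\Tub}(x_{\text{parent}})\le(1+\varepsilon)\texttt{cost}_{\Glb}(x_{\text{parent}})$. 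Combining this with $\texttt{cost}_{\Glb}(x)=\texttt{cost}_{\Glb}(x_{\text{parent}})+\texttt{cost}(x_{\text{parent}},x)$ and $\varepsilon\ge 0$ gives $\texttt{cost}_{\Tub}(x)=\texttt{cost}_{\Tub}(x_{\text{parent}})+\texttt{cost}(x_{\text{parent}},x)\le(1+\varepsilon)\texttt{cost}_{\Glb}(x)$, so the invariant is restored for $x$ before it is popped (consistent with Obs.~\ref{obs:7}). Finally I would observe that rerouting only \emph{decreases} $\texttt{cost}_{\Tub}$ on $x$ and the subtree of $x$ in \Tub, while a deletion (line~11) only \emph{increases} $\texttt{cost}_{\Glb}$ on the set $D$; neither operation can turn a satisfied node into a violated one, and the keys of the nodes of $D$ still in $Q$ are refreshed in lines~12--13, so the loop invariant is preserved in every branch.

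I expect the main obstacle to be the reasoning in the rerouting branch, namely guaranteeing that when $x$ is popped its shortest-path parent is already ``settled.'' This is precisely what the ordering of $Q$ by $\texttt{cost}_{\Glb}$ buys, exactly as in Dijkstra's algorithm, but it hinges on positive edge costs so that $x_{\text{parent}}$ is strictly cheaper than $x$ and thus cannot sit in $Q$ while $x$ is at its head. The second delicate point is monotonicity: I must check carefully that the two kinds of in-loop updates move the two sides of the inequality only in the safe direction (lowering the left-hand side or raising the right-hand side), so that no node outside $Q$ is ever silently invalidated. These two facts together close the induction and yield the lemma.
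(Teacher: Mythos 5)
Your proposal is correct and follows essentially the same route as the paper's proof: the same loop invariant (every node outside $Q$ satisfies the bounded approximation invariant), the same Dijkstra-style key step that the head $x$ of $Q$, ordered by $\texttt{cost}_{\Glb}$, has its SSSP parent outside $Q$ and hence already satisfying the invariant, the same chain of inequalities restoring the invariant for $x$ after rewiring, and the same termination accounting (each iteration either pops a node or deletes an edge, both finite resources). Your explicit monotonicity check that in-loop rewirings only lower $\texttt{cost}_{\Tub}$ and deletions only raise $\texttt{cost}_{\Glb}$ is a slightly more careful rendering of what the paper leaves implicit, but it is the same argument.
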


\begin{proof}
Assume that the bounded approximation invariant was maintained prior to a call to \texttt{consider\_edge}($x_{1}, x_{2}$).
By Observation~\ref{obs:5} inserting a new edge (line~1) may cause the bounded approximation invariant to be violated for a set of nodes. Moreover, it is the \emph{only} place where such an event can occur.
Observation~\ref{obs:7} implies that the bounded approximation invariant holds for every vertex \emph{not} in~$Q$. 

Recall that in the priority queue  we order the nodes according to $\texttt{cost}_{\Glb}$ (from low to high) and at each iteration of \texttt{consider\_edge}($x_{1}, x_{2}$)   the top of the priority queue~$x$ is considered.
The parent $x_{\text{parent}}$ of $x$, that has a smaller cost value, cannot be in the priority queue.
Thus, the bounded approximation invariant holds for $x_{\text{parent}}$.
Namely,
\[
	\texttt{cost}_{\Tub}(x_{\text{parent}}) 
			\leq 
	(1+\varepsilon) \cdot \texttt{cost}_{\Glb}(x_{\text{parent}}).
\]
Now, if the edge between $x_{\text{parent}}$ and $x$ is found to be free (line~7), we update the approximation tree (line~8).
It follows that after such an event, 
\begin{eqnarray}
\texttt{cost}_{\Tub}(x)
		&		= 	&		
\texttt{cost}_{\Tub}(x_{parent}) + 
 		\nonumber \\
		&			&		\texttt{cost}(x_{\text{parent}}, x)
 		\nonumber \\
		&		\leq 	&		
(1+\varepsilon) \cdot \texttt{cost}_{\Glb}(x_\text{parent}) + 
 		\nonumber \\
		&			&		\texttt{cost}(x_\text{parent},x)
 		\nonumber \\
		&		\leq 	&		
(1+\varepsilon) \cdot  \texttt{cost}_{\Glb}(x). \nonumber 
\end{eqnarray}
Namely, after updating the approximation tree, the bounded approximation invariant holds for the node~$x$.

To summarize, at each iteration of Alg.~\ref{alg_update} (lines 3-16), 
either:
(i)~we remove a node~$x$ from~$Q$ (line~9 or line~15)
or
(ii)~we remove an incoming edge to the node $x$ from the lower bound graph (line~11).
If the node~$x$ was removed from~$Q$ (case~(i)), 
the bounded approximation invariant holds---either it was not violated to begin with (line~15)
or it holds after updating  the approximation tree (lines~8-9).


To finish the proof we need to show that the main loop (lines~3-15) in Alg.~\ref{alg_update} indeed terminates.
Recall that the degree of each node is $O(\log n)$.
Thus, a node~$x$ cannot be at the head of the queue more than $O(\log n)$ times (after each time we either remove an incoming edge or remove~$x$ from the queue).
This in turn implies that after at most $O(n\log n)$ iterations~$Q$ is empty and the main loop terminates. 
\end{proof}

From 
Corollary~\ref{cor_lb},
Lemma~\ref{lem_invariant} and
using the asymptotic optimality of RRG we conclude, 
\begin{thm}
 	\eRRT is asymptotically near-optimal with an approximation factor of $(1+\varepsilon)$. 
\end{thm}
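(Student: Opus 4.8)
The plan is to deduce near-optimality from the two invariants established above together with the asymptotic optimality of RRG, by chaining their guarantees along the feasible solution that \eRRT extracts. Throughout I would fix a single sequence of random samples and compare the three roadmaps $\Tub$, $\Glb$ and $\calG_{RRG}$ that this common sequence induces; by Observation~\ref{obs:1} they share the same vertex set, so a node reaching $\calX_{goal}$ is present in all three simultaneously.

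First I would record that both invariants hold after every iteration of the main loop. The lower bound invariant is exactly Corollary~\ref{cor_lb}. For the bounded approximation invariant I would argue by induction on the number of calls to \texttt{consider\_edge}: by Observation~\ref{obs:5} the invariant is trivially true before the first such call (nothing has yet decreased $\texttt{cost}_{\Glb}$), and Lemma~\ref{lem_invariant} supplies the inductive step, since each call terminates with the invariant restored. Hence after any iteration every node $x$ satisfies both $\texttt{cost}_{\Tub}(x) \leq (1+\varepsilon)\cdot\texttt{cost}_{\Glb}(x)$ and $\texttt{cost}_{\Glb}(x) \leq \texttt{cost}_{\calG_{RRG}}(x)$.

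Next I would combine the two bounds. Let $x_g$ be the goal-reaching node along the path RRG would return, i.e.\ the one minimizing $\texttt{cost}_{\calG_{RRG}}$ over $\calX_{goal}$. The path from $x_{\text{init}}$ to $x_g$ in $\Tub$ is feasible by Observation~\ref{obs:4} (only $\Tub$ edges are guaranteed collision-free, whereas $\Glb$ may retain in-collision edges), and composing the two inequalities gives
\[
	\texttt{cost}_{\Tub}(x_g)
		\leq (1+\varepsilon)\cdot\texttt{cost}_{\Glb}(x_g)
		\leq (1+\varepsilon)\cdot\texttt{cost}_{\calG_{RRG}}(x_g).
\]
Since the solution \eRRT extracts is the cheapest feasible path in $\Tub$ to $\calX_{goal}$, its cost is at most $\texttt{cost}_{\Tub}(x_g)$, hence at most $(1+\varepsilon)$ times the cost of the path RRG returns on the same samples.

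Finally I would invoke the asymptotic optimality of RRG: as the number of samples grows, $\texttt{cost}_{\calG_{RRG}}(x_g)$ converges to the optimal cost $c^*$ with probability one. Substituting this into the displayed chain yields $\limsup \texttt{cost}_{\Tub}(x_g) \leq (1+\varepsilon)\,c^*$ almost surely, which is precisely the claimed near-optimality. The algebraic chaining is routine; the step that needs the most care is the probabilistic passage to the limit---specifically that the node realizing RRG's convergence lies in $\Tub$ as well (guaranteed by Observation~\ref{obs:1}), so that the per-iteration invariants transfer RRG's almost-sure convergence to $\Tub$ without re-proving any sampling argument.
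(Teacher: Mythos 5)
Your proposal is correct and follows exactly the paper's argument: the paper derives the theorem in one sentence from Corollary~\ref{cor_lb} (lower bound invariant), Lemma~\ref{lem_invariant} (bounded approximation invariant, established by the same induction on calls to \texttt{consider\_edge} that you use), and the asymptotic optimality of RRG, which is precisely the chain $\texttt{cost}_{\Tub}(x_g) \leq (1+\varepsilon)\cdot\texttt{cost}_{\Glb}(x_g) \leq (1+\varepsilon)\cdot\texttt{cost}_{\calG_{RRG}}(x_g)$ you spell out. Your version merely makes explicit the supporting details (shared vertex sets via Observation~\ref{obs:1}, feasibility of $\Tub$ paths via Observation~\ref{obs:4}, and the almost-sure passage to the limit) that the paper leaves implicit.
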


\noindent
Namely, the cost of the path computed by \eRRT converges to a cost at most $(1+\varepsilon)$ times the cost of the optimal path almost surely.	

We continue now to discuss the time complexity of the algorithm.
If $\delta$ is the number of nodes updated during a call to 
an SSSP procedure\footnote{The number of nodes~$\delta$ updated during an SSSP procedure depends on the topology of the graph and the edge weights. 
Theoretically, in the worst case $\delta = O(n)$ and a dynamic SSSP algorithm cannot perform better than recomputing shortest paths from scratch.
However, in practice this value is much smaller. } 
(namely, 
\texttt{insert\_edge}$_{\text{SSSP}}$ 
or
\texttt{delete\_edge}$_{\text{SSSP}}$),
then the complexity of the procedure is 
$O(\delta \log n)$ 
when using the algorithm of Ramalingam et al.~\cite{RR96}.
Set $\hat{\delta}$ to be 
the maximum value of $\delta$ 
over all calls to SSSP procedures
(Alg~\ref{alg_RRT} line~11 and 
Alg~\ref{alg_update}, lines~1 and~11)
and let~$n$ denote the final number of samples used by \eRRT.

We have $O(n\log n)$ edges
and each edge will be inserted to~\Glb once 
(Alg~\ref{alg_RRT} line~11 or 
Alg~\ref{alg_update} line~1)
and possibly be removed from~\Glb once  
(Alg~\ref{alg_update} line~11).
Therefor, the total complexity due to the  SSSP procedures
is $O(\hat{\delta} \cdot n\log^2 n)$.
The time-complexity of all the other operations (nearest neighbours, collision detection etc.) is similar to RRG which runs in time $O( n \log n)$.

\begin{cor}
\label{cor_complex}
	\eRRT runs in time $O(\hat{\delta} \cdot n\log^2 n)$, 
	where~$n$ is the number of samples and
	$\hat{\delta}$ is the maximal number of nodes updated over all 
	SSSP procedures .
\end{cor}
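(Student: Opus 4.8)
The plan is to account for the running time in two disjoint buckets---work performed inside the dynamic single-source shortest-path (SSSP) routines, and all remaining work (sampling, nearest-neighbour queries, collision checks, and queue bookkeeping)---and to bound each bucket by an \emph{aggregate} argument over the whole execution rather than per iteration of the outer \texttt{while} loop. The crucial observation driving the SSSP bound is that every edge is touched by the SSSP machinery a constant number of times over the lifetime of the algorithm: each of the $O(n\log n)$ candidate edges is inserted into $\Glb$ exactly once (Alg.~\ref{alg_RRT} line~11 or Alg.~\ref{alg_update} line~1) and, once deleted (Alg.~\ref{alg_update} line~11), is never reconsidered, so it is removed at most once. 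Hence the total number of calls to \texttt{insert\_edge$_{\text{SSSP}}$} and \texttt{delete\_edge$_{\text{SSSP}}$} across the entire run is $O(n\log n)$.

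With the number of SSSP calls in hand, I would invoke the per-call cost of the fully dynamic algorithm of Ramalingam et al.~\cite{RR96}: a single insertion or deletion that changes the costs of $\delta$ nodes runs in $O(\delta\log n)$ time. Bounding $\delta$ by its maximum $\hat{\delta}$ over all calls, each SSSP operation costs $O(\hat{\delta}\log n)$, and multiplying by the $O(n\log n)$ calls yields $O(\hat{\delta}\cdot n\log^2 n)$ for the entire SSSP-related workload.

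For the second bucket, I would argue that the remaining operations mirror those of RRG on the same sample sequence: there are $n$ milestones, each triggering one nearest-neighbour query, $O(\log n)$ collision checks, and $O(\log n)$ edge considerations, for a total of $O(n\log n)$---the same as RRG. The priority-queue bookkeeping inside \texttt{consider\_edge} is already subsumed: by the termination argument of Lemma~\ref{lem_invariant}, each node reaches the head of $Q$ only $O(\log n)$ times, and each such event can be charged to an edge insertion or deletion, so these queue operations contribute at most an extra $\log n$ factor that is already absorbed into the SSSP bound. Adding the two buckets gives the claimed $O(\hat{\delta}\cdot n\log^2 n)$.

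The hard part will be resisting the temptation to bound the cost of a single call to \texttt{consider\_edge} and then multiply by the number of calls. One invocation can perform up to $O(n\log n)$ inner iterations (the very bound used to prove termination in Lemma~\ref{lem_invariant}), so a naive per-call product would grossly overcount. The correct accounting must be global: every unit of SSSP work is charged to the one-time insertion or deletion of a specific edge, and this is exactly what forces an $O(n\log n)$ factor rather than something quadratic. A secondary subtlety I would verify is that the \texttt{update\_cost} operations on $Q$ (Alg.~\ref{alg_update} line~13), triggered by the set $D$ produced by a deletion, are likewise charged to that deletion and hence introduce no independent, uncharged cost.
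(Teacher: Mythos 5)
Your proof is correct and takes essentially the same route as the paper: charge each of the $O(n\log n)$ edges for one insertion into and at most one deletion from \Glb, bound each SSSP call by $O(\hat{\delta}\log n)$ via the algorithm of Ramalingam et al.~\cite{RR96}, and absorb the remaining RRG-like work (sampling, nearest neighbors, collision checks) in $O(n\log n)$. Your explicit charging of the priority-queue operations and your warning against multiplying a per-call bound by the number of calls are sound elaborations of the same aggregate argument, which the paper states more tersely.
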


While this running time may seem discouraging, 
we note that in practice, the local planning dominates the actual running time of the algorithm in practice.
As we demonstrate in Section~\ref{sec:eval} through various simulations, 
\eRRT produces high-quality results in an efficient manner.

\subsection{Implementation details}
We describe the following optimizations that we use 
in order to speed up the running-time of the algorithm.
The first is that the set $X_{\text{near}}$ is ordered according to the cost to reach $x_{\text{new}}$ from $x_{\text{init}}$ through an element $x$ of $X_{\text{near}}$.
Hence, the set $X_{\text{near}}$ will be traversed from the node that yields the smallest lower bound to reach $x_{\text{new}}$ to the node that will yield the highest lower bound.
After the first edge that does \emph{not} violate the bounded approximation invariant, no subsequent node can improve the cost to reach $x_{\text{new}}$ and \texttt{insert\_edge}$_{\texttt{SSSP}}$ will not need to perform any updates.
This ordering was previously used  to speed up~RRT* (see, e.g.,~\cite{PKSFTW11, WvdB13}). 

The second optimization comes  to avoid the situation where \texttt{insert\_edge}$_{\texttt{SSSP}}$ is called and immediately afterwards the same edge is removed.
Hence, given an edge, we first check if the bounded approximation invariant will be violated had  the edge been inserted.
If this is indeed the case, the local planner is invoked and only if the edge is collision free \texttt{insert\_edge}$_{\texttt{SSSP}}$ is called.

\subsection{Discussion}
Let ${\rm T}_{ALG}^{\omega}$ denote the time needed for an algorithm $ALG$ to find a feasible solution on a sequence~$\omega$of random samples. 
Clearly, ${\rm T}_{\rm RRT}^{\omega} \leq {\rm T}_{\rm RRG}^{\omega}$ (as RRG may require more calls to the collision detector than the RRT algorithm).
Moreover, for every $\varepsilon_1 \leq \varepsilon_2$ it holds that 
\[
{\rm T}_{\rm RRT}^{\omega} 
	\leq 
{\rm T}_{\rm \eRRT(\varepsilon_2)}^{\omega} 
	\leq
{\rm T}_{\rm \eRRT(\varepsilon_1)}^{\omega} 
	\leq 
{\rm T}_{\rm RRG}^{\omega}.
\]

Thus, given a limited amount of time, RRG may fail to construct any solution. 
On the other hand, RRT may find a solution fast but will not improve its quality (if the goal is a single configuration).
\eRRT allows to find a feasible path quickly while 
continuing to search for a path of higher quality.

\vspace{2mm}
\noindent
\textbf{Remark}
The conference version of this paper contained an oversight with regard to how the bounded approximation invariant was maintained.
Specifically, 
instead of storing~\Glb as a graph,
a tree was stored which was rewired locally.
When the algorithm tested if the bounded approximation invariant was violated for a node~$x$,
it only considered the \emph{children} of~$x$ in the tree.
This local test did not take into account the fact that changing the cost of~$x$ in the tree could also change the cost of nodes~$y$ that are descendants of~$x$ (but not its children).
The implications of the oversight is that the algorithm was not asymptotically near optimal.
The experimental results presented in the conference version of this paper suggest that in certain scenarios this oversight did not have a significant effect on the convergence to high quality solutions.
Having said that, 
\eRRT as presented in this paper is both 
asymptotically near optimal
and 
converges to high quality solutions faster than the original algorithm.

\section{Evaluation}
\label{sec:eval}
We present an experimental evaluation of the performance of \eRRT as an anytime algorithm on different scenarios consisting of 3,6 and 12 DoFs (Fig.~\ref{fig:scenarios}).
The algorithm was implemented using the Open Motion Planning Library (OMPL~0.10.2)~\cite{SMK12} and our implementation is currently distributed with the OMPL release. All experiments were run on a 2.8GHz Intel Core i7 processor with 8GB of memory.
RRT* was implemented by using the ordering optimization described in Section~\ref{sec:alg} and~\cite{PKSFTW11}).

\begin{figure*}[t,b,h]
  \centering
  \subfloat
   [\sf Maze scenario]
   { 
   	\includegraphics[height =3.5 cm]{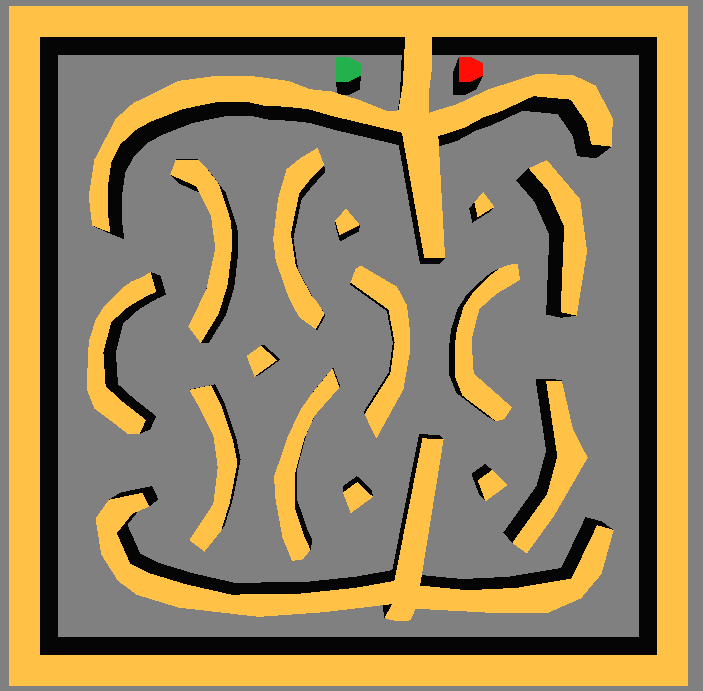}
   	\label{fig:maze}
   }
   \hspace{3mm}
  \subfloat
   [\sf Alternating barriers scenario]
   {
   	\includegraphics[height =3.5 cm]{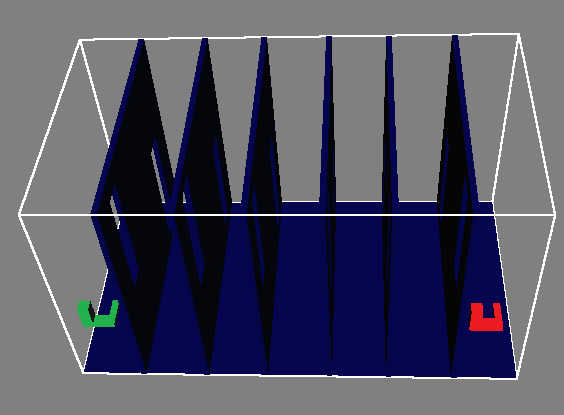}
   	\label{fig:alternating}
   }
   \hspace{3mm}
  \subfloat
   [\sf Cubicles scenario (2-robots)]
   {
   	\includegraphics[height =3.5 cm]{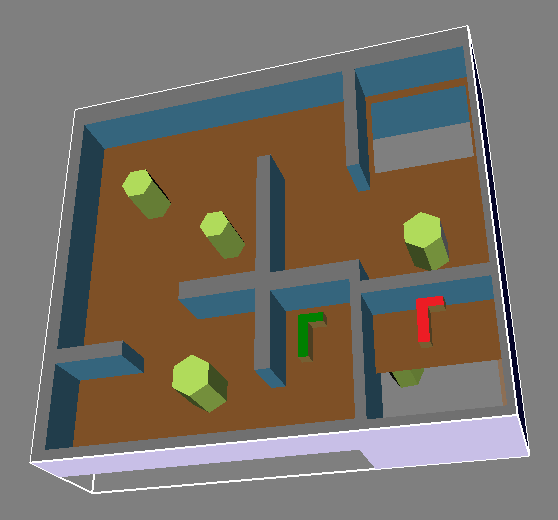}
   	\label{fig:cubicles}
   }
  \caption{\sf 	Benchmark scenarios. 
  							The start and goal configuration are depicted in green and 
  							red, respectively.}
  \label{fig:scenarios}
\end{figure*}

\begin{wrapfigure}{r}{0.18\textwidth}
	\vspace{-2mm}
  \begin{center}
    \includegraphics[height =1.8 cm]{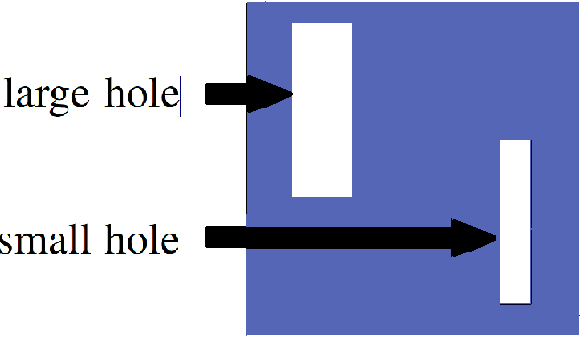}
  \end{center}
  \vspace{-2mm}
  \caption{\sf 	One barrier of the Alternating barriers scenario.}
  \label{fig:barrier}
  \vspace{-3mm}
\end{wrapfigure}

The Maze scenario (Fig.~\ref{fig:maze}) consists of a planar polygonal robot that can translate and rotate.
The Alternating barriers scenario (Fig.~\ref{fig:alternating}) consists of a robot with three perpendicular rods free-flying in space.
The robot needs to pass through a series of barriers each containing a large and a small hole. For an illustration of one such barrier, see Fig.~\ref{fig:barrier}.
The large holes are located at alternating sides of consecutive barriers. 
Thus, an easy path to find would be to cross each barrier through a large hole. 
A high-quality path would require passing through a small hole after each large hole.
Finally, the cubicles scenario consists of two L-shaped robots free-flying in space that need to exchange locations amidst a sparse collection of obstacles\footnote{The Maze Scenario and the Cubicles Scenario are provided as part of the OMPL distribution.}.

We compare the performance of \eRRT with RRT, RRG and RRT* 
when a fixed time budget is given. 
We add another algorithm which we call RRT+RRT* which initially runs RRT and once a solution is found runs RRT*. RRT+RRT* will find a solution as fast as RRT and is asymptotically-optimal.
For \eRRT we consider $(1+\varepsilon)$ values of $1.2, 1.4, 1.8$ and report on the success rate of each algorithm (Fig.~\ref{fig:suc}). Additionally, we report on the path length after applying shortcuts (Fig.~\ref{fig:len}). Each result is averaged over 100 different runs.

\begin{figure*}[t,b,h]
  \centering
  \subfloat
   [\sf Maze Scenario]
   {
	\includegraphics[width=0.25\textwidth]{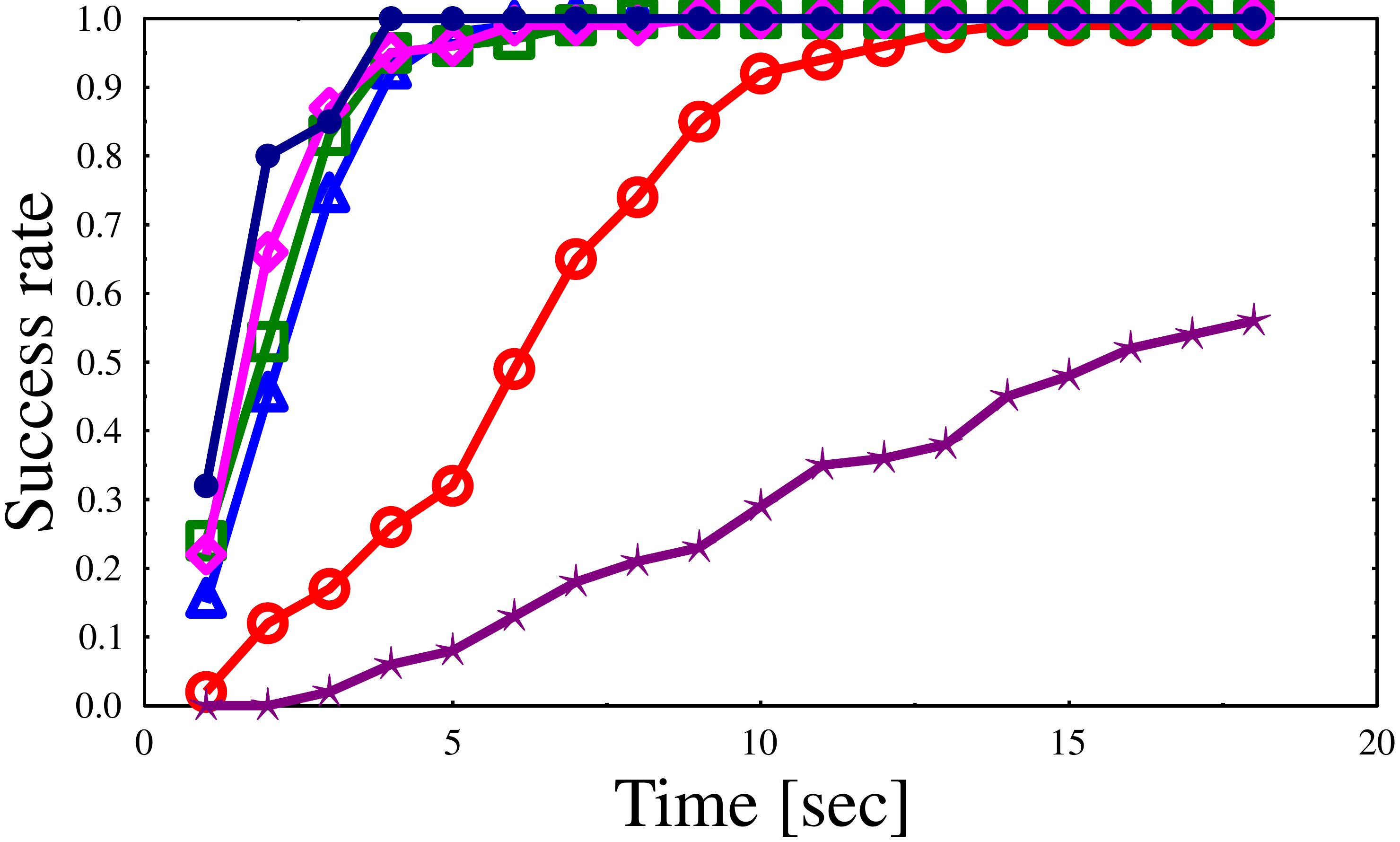}
	\label{fig:maze_suc}
   }
  \hspace{1mm}
  \subfloat
   [\sf Alternating barriers scenario]
   { 
   	\includegraphics[width=0.25\textwidth]
			{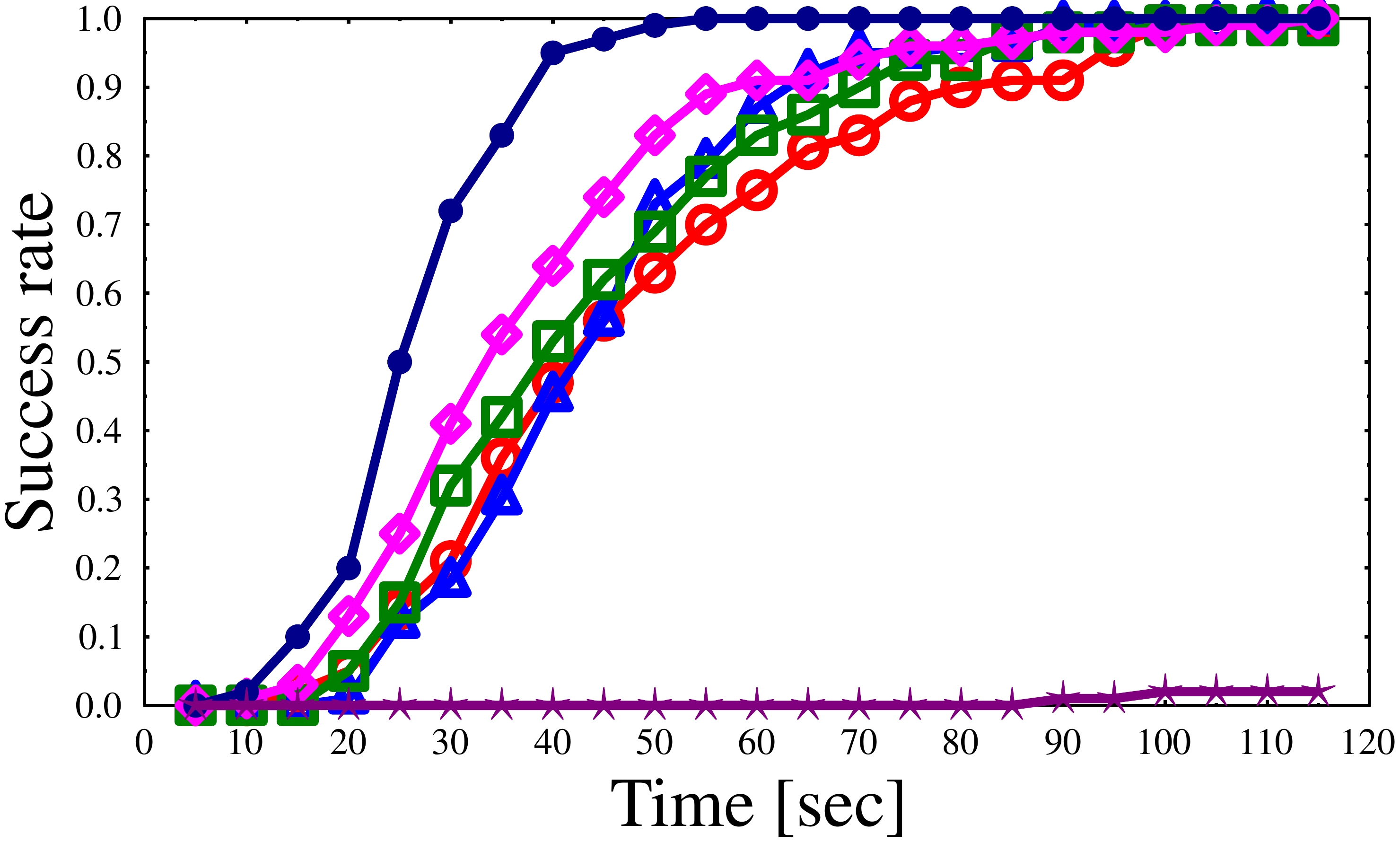}
   	\label{fig:alternating_suc}
   }
   \hspace{1mm}
  \subfloat
   [\sf Cubicles scenario]
   { 
   	\includegraphics[width=0.25\textwidth]
   		{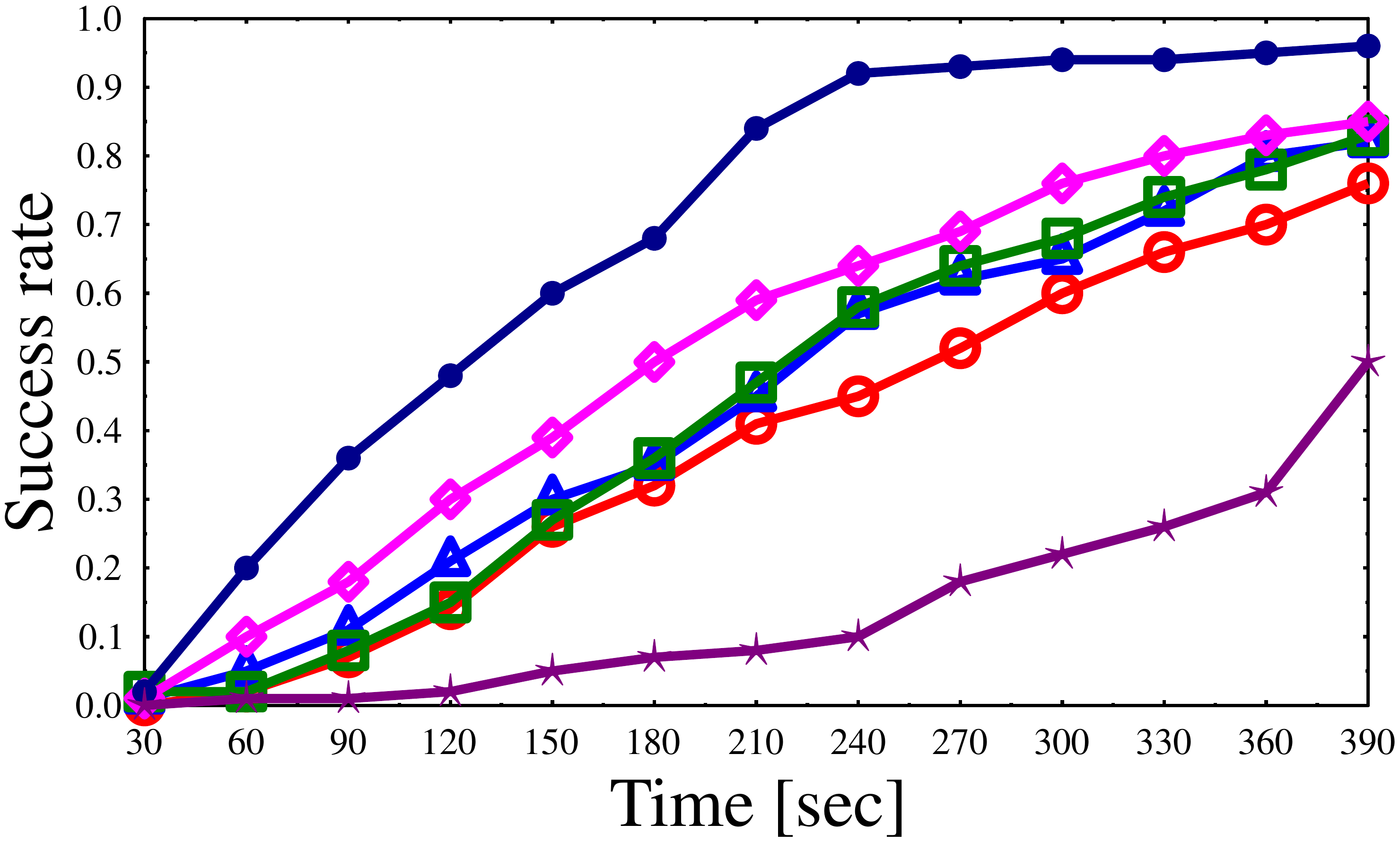}
   	\label{fig:cubicles_suc}
   }
   \hspace{1mm}
  \subfloat
   [\sf Legend]
   { 
   	\includegraphics[width=0.108\textwidth]
   		{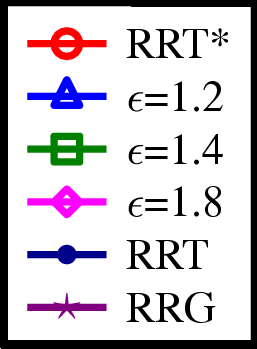}
   	\label{fig:legend_suc}
   }
  \caption{\sf 	Success rate for algorithms on different scenarios 
  							(RRT and RRT+RRT* have almost identical success rates; 
  							the plot for RRT+RRT* is omitted to avoid cluttering of the 
  							graph).}
  \label{fig:suc}
\end{figure*}

\begin{figure*}[t,b,h]
  \centering
  \subfloat
   [\sf Maze Scenario]
   { 
   	\includegraphics[width=0.25\textwidth]
   		{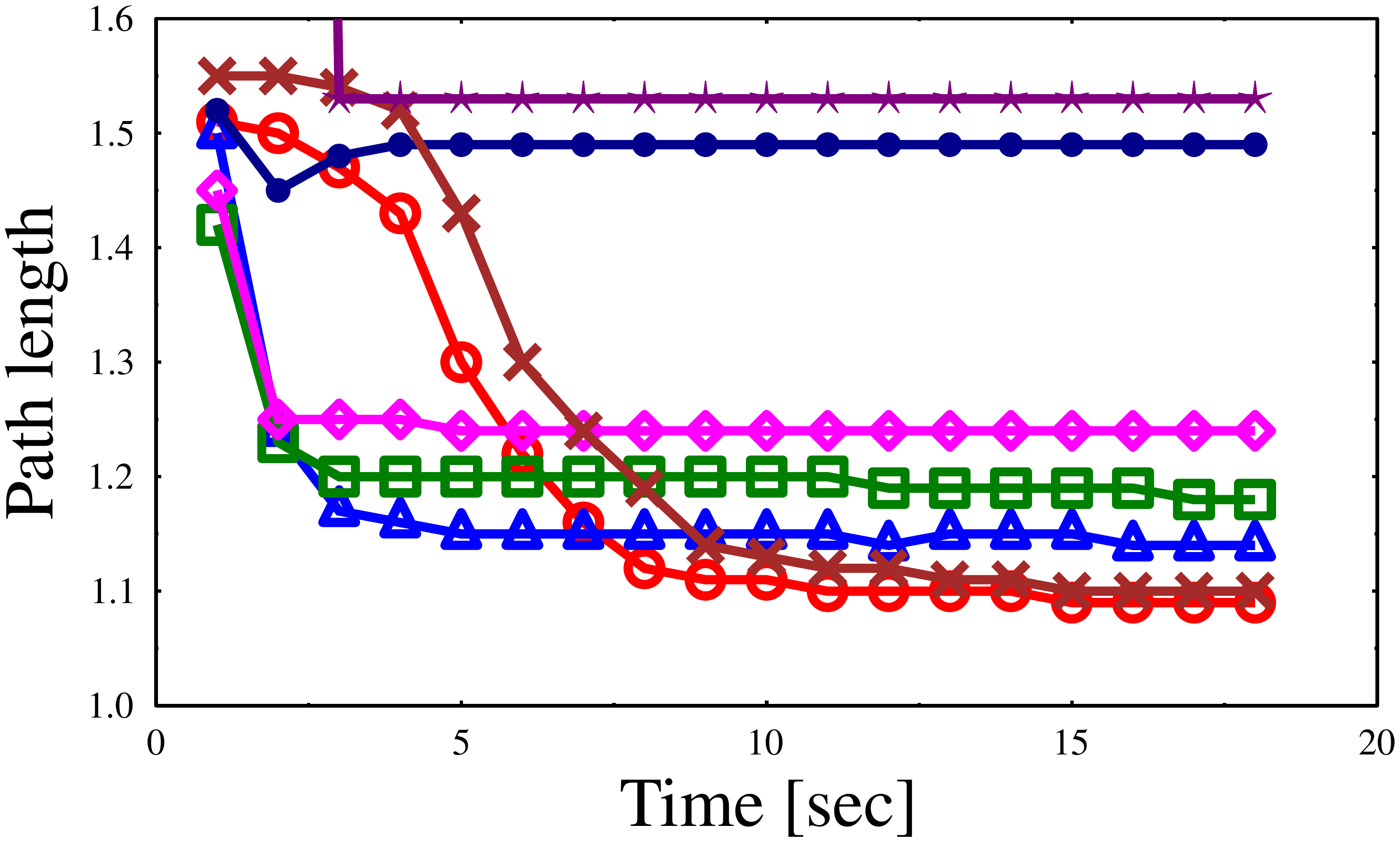}
   	\label{fig:maze_len}
   }
   \hspace{1mm}
  \subfloat
   [\sf Alternating barriers scenario]
   {
	\includegraphics[width=0.25\textwidth]
   		{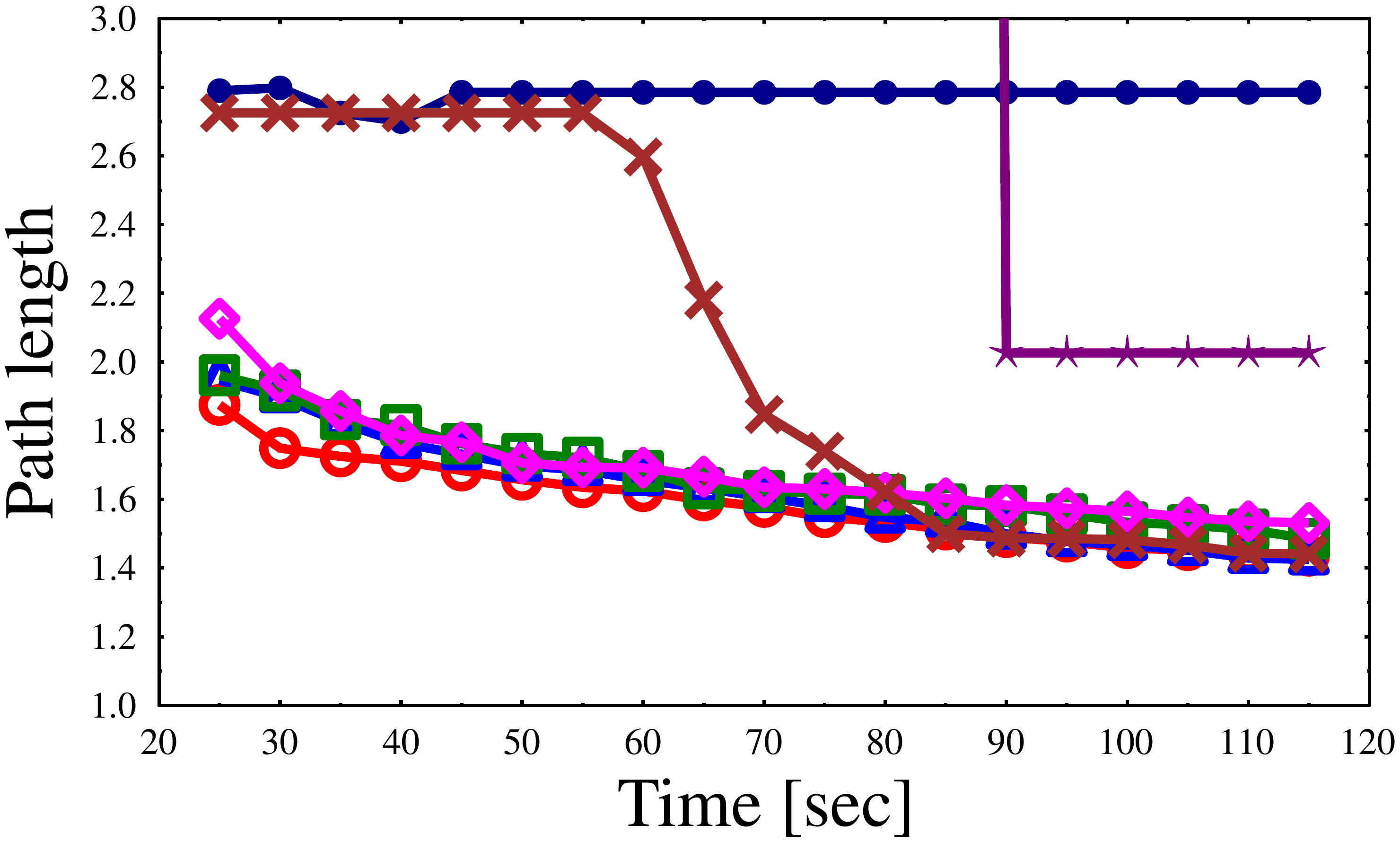}
   	\label{fig:alternating_len}
   }
   \hspace{1mm}
  \subfloat
   [\sf Cubicles scenario]
   { 
   	\includegraphics[width=0.25\textwidth]
   		{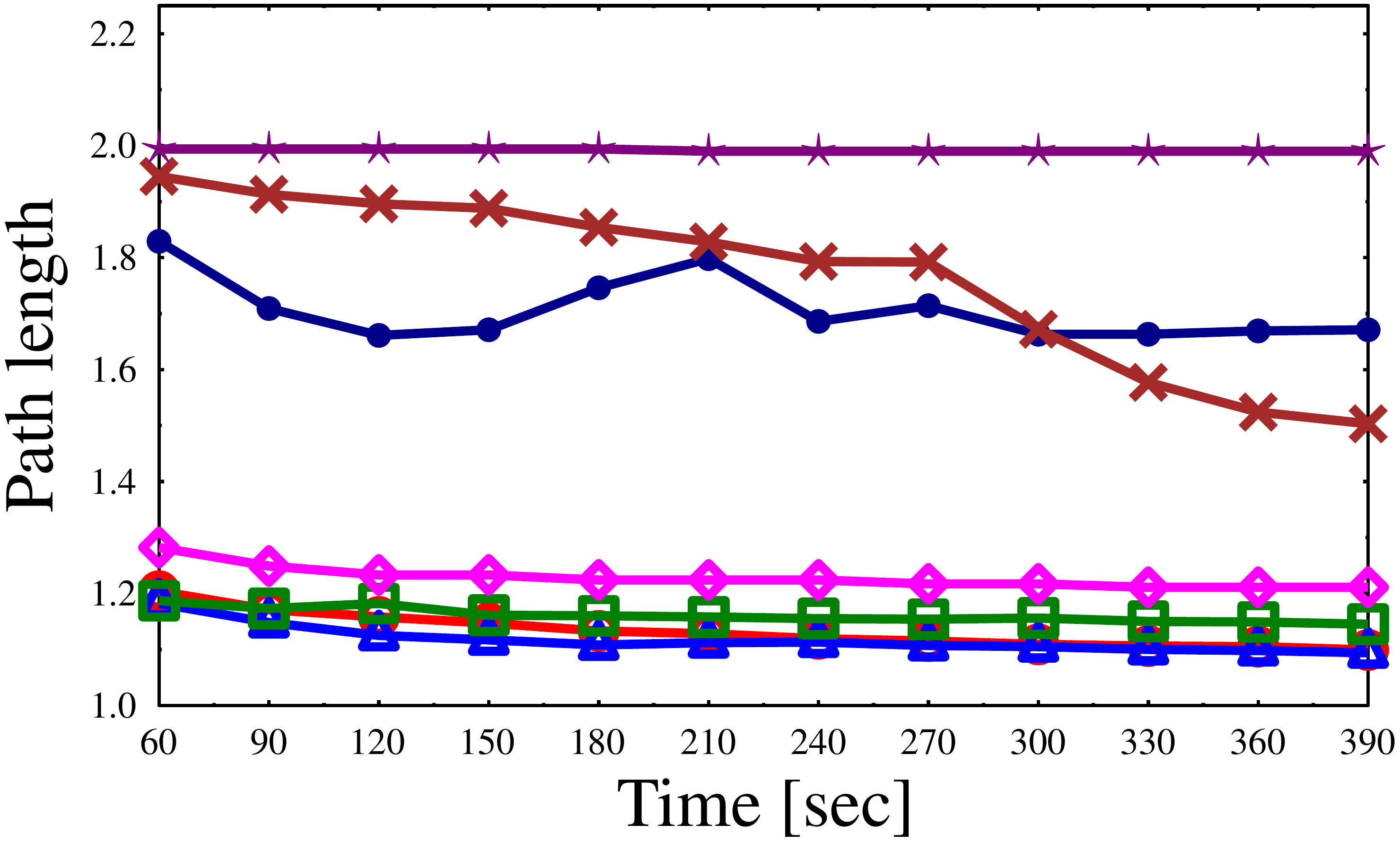}
   	\label{fig:cubicles_len}
   }
     \hspace{1mm}
  \subfloat
   [\sf Legend]
   { 
   	\includegraphics[width=0.108\textwidth]
   		{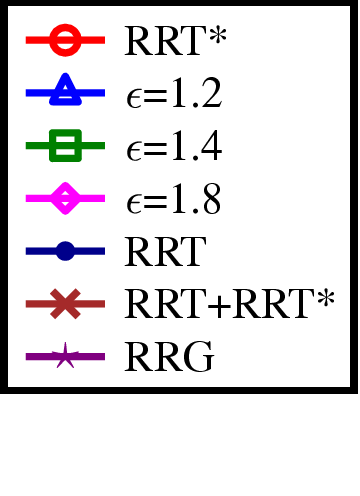}
   	\label{fig:legend_suc}
   } 
  \caption{\sf 	Path lengths for algorithms on different scenarios.
			  	Length values are normalized such that a length of one represents the length of an 
			  	optimal path.
  			}
  \label{fig:len}
  \vspace{-5mm}
\end{figure*}

Fig.~\ref{fig:suc} depicts similar behaviour for all scenarios:
As one would expect, the success rate for all algorithms has a monotonically increasing trend as the time budget increases.
For a specific time budget, the success rate for RRT and RRT+RRT* is typically highest while that of the RRT* and RRG is lowest. 
The success rate for \eRRT for a specific time budget, typically increases as the value of $\varepsilon$ increases. 
Fig.~\ref{fig:len} also depicts similar behavior for all scenarios: the average path length decreases for all algorithms 
(except for RRT).
The average path length for \eRRT typically decreases as the value of~$\varepsilon$ decreases and is comparable to that of RRT* for low values of $\varepsilon$.
RRT+RRT* behaves similarly to RRT* but with a ``shift'' along the time-axis which is due to the initial run of RRT. 
We note that although RRG and RRT+RRT* are asymptotically-optimal, their overhead makes them poor algorithms when one desires a \emph{high-quality} solution very fast.

Thus, Fig.~\ref{fig:suc} and~\ref{fig:len} should be looked at simultaneously as they encompass the tradeoff between speed to find \emph{any} solution and the quality of the solution found.
Let us demonstrate this on the alternating barriers scenario:
If we look at the success rate of each algorithm to find \emph{any} solution~(Fig.~\ref{fig:alternating_suc}),
one can see that RRT manages to achieve a success rate of 70\% after 30 seconds. RRT*, on the other hand, requires 70 seconds to achieve the same success rate (more than double the time). 
For all different values of $\varepsilon$, \eRRT manages  to achieve a success rate of 70\% after 50 seconds (around 60\% overhead when compared to RRT).
Now, considering the path length at 50 seconds, typically the paths extracted from \eRRT yield the same quality when compared to RRT* while ensuring a high success rate.

The same behavior of finding paths of high-quality (similar to the quality that RRT* produces) within the time-frames that RRT requires in order to find \emph{any} solution has been observed for both the Maze scenario and the Cubicles scenario.
Results omitted in this text.
For supplementary material the reader is referred to \url{http://acg.cs.tau.ac.il/projects/LBT-RRT}.


\section{Lazy, goal-biased \eRRT}
\label{sec:extensions}
In this section we show to further reduce the number of calls to the local planner by incorporating a lazy approach together with a goal bias.

\begin{algorithm}[t,b]
\caption{\texttt{consider\_edge\_goal\_biased}$(x_1, x_2)$}
\label{alg_update2}
\begin{algorithmic}[1]
	\STATE $c_{\min}^{\text{apx}} \leftarrow $
			\texttt{cost$_{\text{LPA*}}$}($\Tub$)
	\STATE	$\texttt{insert\_edge$_{\text{LPA*}}$}(\Glb, x_1, x_2)$  
	\STATE $x \leftarrow $
			\texttt{shortest\_path$_{\text{LPA*}}$}($\Glb$)
	\STATE $x_{parent} \leftarrow 
			\texttt{parent}_{\texttt{LPA*}}(\Glb, x)$
	\STATE $c_{\min}^{\text{lb}} \leftarrow $
			\texttt{cost$_{\text{LPA*}}$}($\Glb$)

	\WHILE {$c_{\min}^{\text{apx}} > (1 + \varepsilon) \cdot c_{\min}^{\text{lb}}$}
		
		\IF {(\texttt{collision\_free} ($x_{parent}, x$))}
			\STATE	$\texttt{insert\_edge$_{\text{LPA*}}$}(\Tub, x_{parent},x)$  
			\STATE  \texttt{shortest\_path$_{\text{LPA*}}$}($\Tub$)
				
			\STATE $c_{\min}^{\text{apx}} \leftarrow $
				\texttt{cost$_{\text{LPA*}}$}($\Tub$)
			\STATE $x \leftarrow 
				\texttt{parent}_{\texttt{LPA*}}(x)$
       	\ELSE
			\STATE \texttt{delete\_edge$_{\text{LPA*}}$}$(\Glb, x_{parent}, x)$
			\STATE \textbf{GoTo} line 3
		\ENDIF
	\ENDWHILE

\end{algorithmic}
\end{algorithm}						
\eRRT maintains the lower bound invariant to \emph{every} node.
This is desirable in settings where a high-quality path to every point in the configuration space is required.
However, when only a high-quality path to the goal is needed, 
this may lead to unnecessary time-consuming calls to the local planner.

Therefore, we suggest the following variant of \eRRT where 
we relax the bounded approximation invariant such that it holds only for nodes $x \in X_{goal}$. 
This variant is similar to \eRRT but
differs with respect to the calls to the local planner and with respect to the dynamic shortest-path algorithm used.
As we only maintain the bounded approximation invariant to the goal nodes, 
we do not need to continuously update the (approximate) shortest path to every node in~$\Glb$.
We replace the SSSP algorithm, 
which allows to compute the shortest paths to \emph{every} node in a dynamic graph, with 
Lifelong Planning A* (LPA*)~\cite{KLF04}.
LPA*
allows to repeatedly find shortest paths from a given
start to a given goal while allowing for edge insertions and deletions.
Similiar to A*~\cite{P84}, this is done by using heuristic function~$h$ 
such that for every node~$x$, 
$h(x)$ is an  estimate of the cost to reach the goal from~$x$.

Given a start vertex $x_{init}$, a goal region $X_{Goal}$, 
we will use the following functions which are provided when implementing LPA*:
\texttt{shortest\_path$_{\text{LPA*}}$}$(G)$,
recomputes the shortest path to reach $X_{Goal}$ from $x_{init}$ on the graph $G$
and returns the node $x \in X_{Goal}$ 
such that 
$x \in X_{Goal}$ and 
$\texttt{cost}_{\Glb}(x)$ is minimal among all $x' \in X_{Goal}$.
Once the function has been called, 
the following functions take constant running time:
\texttt{cost$_{\text{LPA*}}$}$(G)$ 
returns the minimal cost to reach $X_{Goal}$ from $x_{init}$ on the graph $G$
and for every node $x$ lying on a shortest path to the goal, 
\texttt{parent}$_{\texttt{LPA*}}(G, x)$ 
returns the predecessor of the node~$x$ along this path.
Additionally, 
$\texttt{insert\_edge$_{\text{LPA*}}$}(G, x, y)$  
and
\texttt{delete\_edge$_{\text{LPA*}}$}$(G, x, y)$ 
inserts (deletes) the edge $(x,y)$ to (from) the graph $G$, respectively.

We are now ready to describe Lazy, goal-biased \eRRT which is similar to \eRRT 
except for the way new edges are considered. 
Instead of the function \texttt{consider\_edge} 
called in lines~14 and~16 of Alg.~\ref{alg_RRT},
the function \texttt{consider\_edge\_goal\_biased} is called.

\texttt{consider\_edge\_goal\_biased}$(x_1, x_2)$, outlined in Alg.~\ref{alg_update2},
begins by computing the cost to reach the goal in~$\Tub$ (line~1)
and in~$\Glb$ after adding the edge $(x_1, x_2)$ lazily to~$\Glb$ (lines~2-5).
Namely, the edge is added with no call to the local planner and without checking if the bounded approximation invariant is violated.
Note that the \emph{relaxed} bounded approximation invariant is violated (line~6)
only if a path to the goal is found.
Clearly, if all edges along the shortest path to the goal are found to be collision free, then the invariant holds.
Thus, the algorithm attempts to follow the edges along the path 
(starting at the last edge and backtracking towards~$x_{init}$)
one by one and test if they are indeed collision-free.
If an edge is collision free (line~7),
it is inserted to~$\Tub$ (line~8), 
and a path to the goal in~$\Tub$ is recomputed (line~9).
This is repeated as long as the relaxed bounded approximation invariant is violated.
If the edge is found to be in collision (line~12),
it is removed from~$\Glb$ (line~13) and the process is repeated (line~14).

Following similar arguments as described in Section~\ref{sec:alg}, 
one can show the correctness of the algorithm.
We note that as long as no path has been found, 
the algorithm performs no more calls to the local planner than RRT.
Additionally, it is worth noting that the planner bares resemblance with Lazy-RRG*~\cite{H15}.

We compared Lazy, goal-biased \eRRT with \eRRT, RRT* and RRG on the Home scenario (Fig.~\ref{fig:home_scene}).
In this scenario, a low quality solution is typically easy to find and all algorithms (except RRG) find a solution with roughly the same success rate as RRT (results omitted). 
Converging to the optimal solution requires longer running times as low-quality paths are easy to find yet high-quality ones pass through narrow passages.
Fig.~\ref{fig:home_cost} depicts the path length obtained by the algorithms as a function of time.
The convergence to the optimal solution of RRG is significantly slower than all other algorithms.
Both \eRRT and RRT* find a low quality solution 
(between five and six times longer than the optimal solution)
within the allowed time frame and manage to slightly improve upon its cost 
(with RRT* obtaining slightly shorter solutions than \eRRT).
When enhancing \eRRT with a lazy approach together with goal-biasing, one can observe that the convergence rate improves substantially.


\begin{figure*}[t,b,h]
  \centering
  \subfloat
   [\sf Home scenario]
   { 
   	\includegraphics[height=4cm]
   		{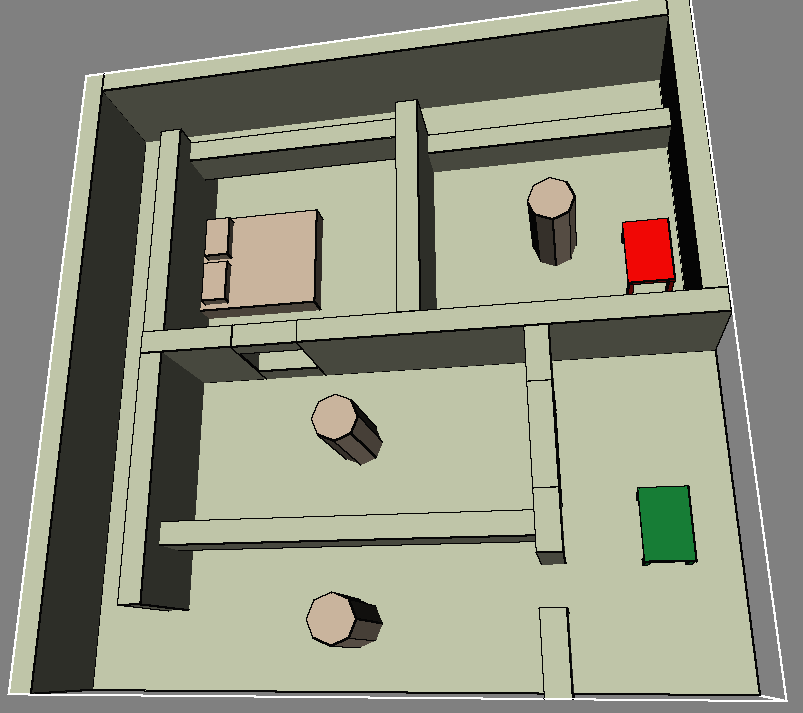}
   	\label{fig:home_scene}
   }
   \hspace{5mm}
  \subfloat
   [\sf Solution cost]
   { 
   	\includegraphics[height=4cm]
   		{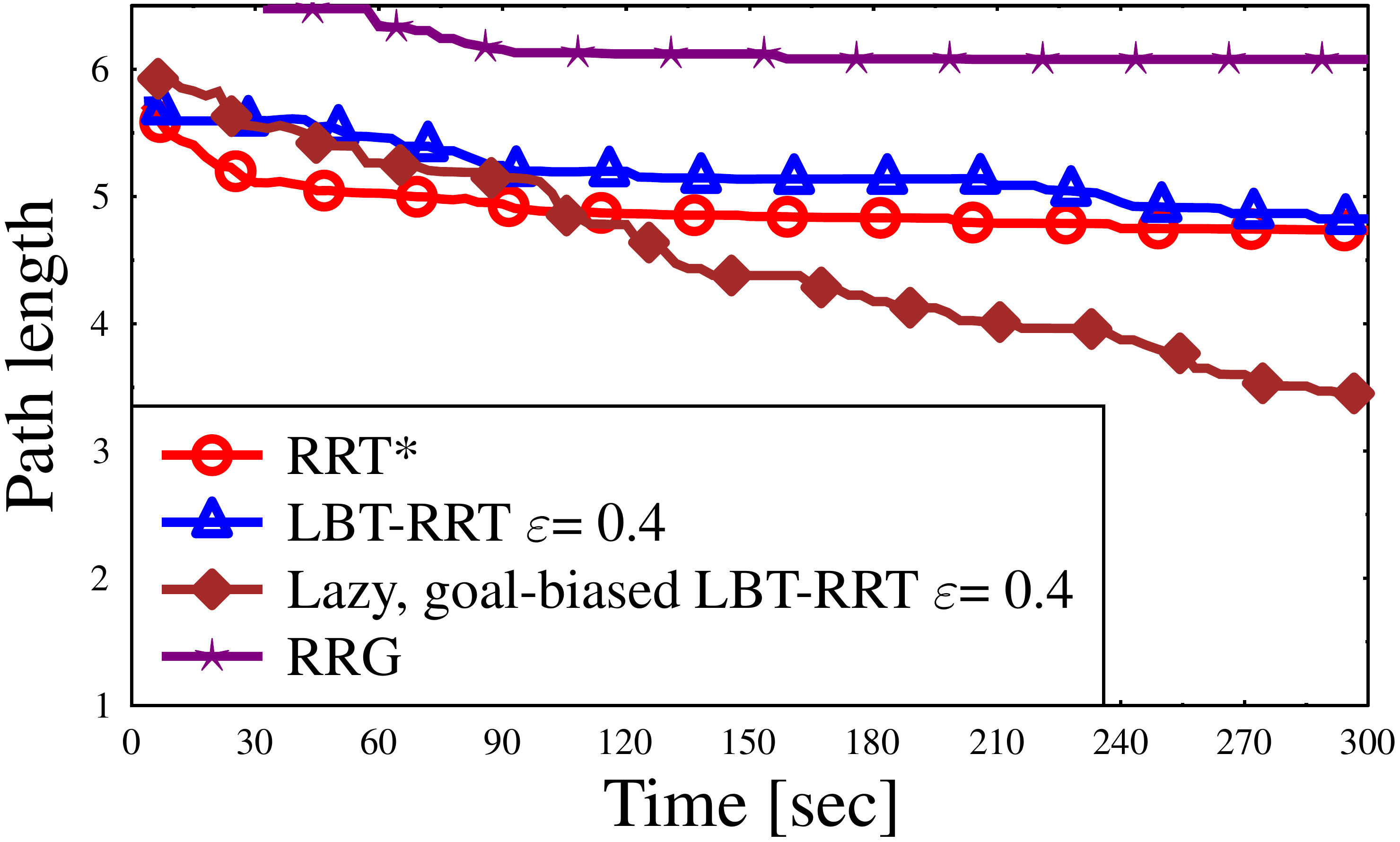}
   	\label{fig:home_cost}
   }
  \caption{\sf 	Simulation results comparing 
  				Lazy, goal-biased \eRRT with \eRRT, RRT* and RRG.
  				(a) Home scenario (provided by the OMPL distribution). 
  				Start and target table-shaped robots are depicted in 
  				green and red, respectively.
  				(b) Path lengths as a function of computation time.
  				Length values are normalized such that a length of one 
  				represents the length of an optimal path
  			}
  \label{fig:len}
  \vspace{-5mm}
\end{figure*}
\section{Conclusion and future work}
\label{sec:con}
In this work we presented an asymptotically near-optimal motion planning algorithm.
Using an approximation factor allows the algorithm to avoid calling the computationally-expensive local planner when no substantially better solution may be obtained.
\eRRT, together with the lazy, goal-biased variant, 
make use of \emph{dynamic shortest path algorithms}.
This is an active research topic in many communities such as 
artificial intelligence and communication networks.

Hence, 
the algorithms we proposed in this work
may benefit from
any advances made for dynamic shortest path algorithms.
For example, recently D'Andrea et al.~\cite{DDFLP13} presented an algorithm that allows for dynamically maintaining shortest path trees under \emph{batches} of updates which can be used by \eRRT instead of the SSSP algorithm.
\textVersion{In many applications where either RRT or RRT* were used we argue that \eRRT may serve as a superior alternative with no fundamental modification to the underlying algorithms,
which originally use RRT or RRT*. 
Moreover, one may consider alternative implementations of \eRRT using tools developed for either RRT or RRT* that can enhance \eRRT.
For further discussions on the subject, 
see the expanded arXiv version of this paper~\cite{SH14-arxiv}.

We show~\cite{SH14-arxiv} that the framework presented in this paper for relaxing the optimality of RRG can be used to have a similar effect on another asymptotically-optimal sampling-based algorithm, FMT*~\cite{JP13}. 
It would be interesting to see if other algorithms can benefit from the approach we take in this work.
}{}

Looking to further extend our framework, we seek natural stopping criteria for \eRRT.
Such criteria could possibly be related to the rate at which the quality is increased as additional samples are introduced.
Once such a criterion is established, one can think of the following  framework:
Run \eRRT with a large approximation factor (large $\varepsilon$)
, once the stopping criterion has been met, decrease the approximation factor and continue running.
This may allow an even quicker convergence to find any feasible path while allowing for refinement as time permits (similar to~\cite{APD11}).
While changing the approximation factor in \eRRT may possibly require a massive rewiring of~$\Glb$ (to maintain the bounded approximation invariant) this is not the case in Lazy, goal-biased \eRRT.
In this variant of \eRRT the approximation factor can change at any stage of the algorithm without any modifications at all.

An interesting question to be further studied is can our framework be applied to different quality measures.
For certain measures, such as bottleneck clearance of a path, this is unlikely, as bounding the quality of an edge already identifies if it is collision-free. 
However, for some other measures such as energy consumption, we believe that the framework could be effectively used.

\section{Acknowledgements}\label{sec:ack}
We wish to thank
Leslie Kaelbling
for suggesting the RRT+RRT* algorithm,
Chengcheng Zhong
for feedback on the implementation of the algorithm
and
Shiri Chechik for advice regrading dynamic shortest path algorithms.

\textVersion{}{
\appendix
\section{Additional applications \& variants}
\label{sec:app}
RRT has been used in numerous applications and various efficient implementations and heuristics have been suggested for it.
Even the relatively recent RRT* has already gained many applications and various implementations.  
Typically, the applications rely on the efficiency of RRT or the asymptotic optimality of RRT*.
We list two such applications (Sections~A and~B below) and discuss the possible advantage of replacing the existing planner (either RRT or RRT*) with \eRRT.

Efficient implementations and heuristics typically take into account the primitive operations used by the RRT and the RRT* algorithms (such as collision detection, nearest neighbor computation, sampling procedure etc.).
Thus, techniques suggested for efficient implementations of RRT and RRT* may be applied to \eRRT with little effort as the latter relies on the same primitive operations. We give two examples in Sections~C and~D below.

Finally, we show how to apply our approach to a different asymptotically-optimal sampling-based algorithm---Fast Marching Trees (FMT*)~\cite{JP13}.

\subsection{Re-planning using RRTs:}
Many real-world applications involve a \Cs that undergoes changes (such as moving obstacles or partial initial information of the workspace).
A common approach to plan in such dynamic environments is to run RRT, and re-plan when a change in the environment occurs.
Re-planning may be done from scratch although this can be unnecessary and time consuming as the assumption is that only part of the environment changes.
Ferguson et al.~\cite{FS06} suggest to 
(i) plan an initial path using RRT,
(ii) when a change in the configuration space is detected, nodes in the existing tree may be invalidated and a ``trimming'' procedure is applied where invalid parts are removed and
(iii) the trimmed tree is grown until a new solution is generated.

Obviously \eRRT can replace RRT in the suggested scheme. 
If the overhead of running \eRRT when compared to RRT is acceptable (which may indeed be the case as the experimental results in Section~\ref{sec:eval} suggest), then the algorithm will be able to produce \emph{high-quality} paths in dynamic environments.

\subsection{High-quality planning on implicitly-defined manifolds:}
Certain motion-planning problems, such as grasping with a multi-fingered hand, involve planning on implicitly-defined manifolds.
Jaillet and Porta~\cite{JP12} address the central challenges of applying RRT* to such cases. 
The challenges include sampling uniformly on the manifold, locating the nearest neighbors using the metric induced by the manifold, computing the shortest path between two points and more.
They suggest AtlasRRT*, an adaptation of RRT* that operates on manifolds.
It follows the same structure as RRT* but maintains an atlas by iteratively adding charts to the atlas to facilitate the primitive operations of RRT* on the manifold (i.e., sampling, nearest-neighbor queries, local planning etc.).

If one is concerned with \emph{fast convergence} to a high quality solution, 
\eRRT can be used seamlessly, replacing the guarantee for optimality with a weaker near-optimality guarantee.

\subsection{Sampling Heuristics:}
Following the exposition of  RRT*, Akgun and Stilman~\cite{AS11} suggested a sampling bias for the RRT* algorithm. 
This bias accelerates cost decrease of the path to the goal in the RRT* tree.
Additionally, they suggest a simple node-rejection criterion to increase efficiency.
These heuristics may be applied to the \eRRT by simply changing the procedure \texttt{sample\_free} (Alg.~\ref{alg_RRT}, line 3).

\subsection{Parallel RRTs:}
In recent years, hardware allowing for parallel implementation of existing algorithms has become widespread both in the form of multi-core Central Processing Units (CPUs) and in the form of Graphics Processing Units (GPUs). 
Parallel implementations for sampling based algorithms have already been proposed in the late 1990s~\cite{AD99}. Since then, a multitude of such implementations emerged (see, e.g.,~\cite{IA12, BKF11}  for a detailed literature review).

We review two approaches to parallel implementation of RRT and RRT* and claim that both approaches may be used for parallel implementation of \eRRT.
The first approach, by Ichnowski et al.~\cite{IA12} suggests parallel variants of RRT and RRT* on multi-core CPUs that achieve superlinear speedup. 
By using CPU-based implementation, their approach retains the ability
to integrate the planners with existing CPU-based libraries and algorithms. They achieve superlinear speedup by:
(i) lock-free parallelism using atomic operations to reduce slowdowns caused by contention,
(ii) partition-based sampling to reduce the size of each processor core's working data set and to improve cache efficiency and
(iii) parallel backtracking in the rewiring phase of RRT*.
\eRRT may benefit from all three key algorithmic features.

Bialkowski et al.~\cite{BKF11} present a second approach for parallel implementation of RRT and RRT*.
They suggest a massively parallel, GPU-based implementation of the collision-checking procedures of RRT and RRT*. 
Again, this approach may be applied to the collision-checking procedure of \eRRT without any need for modification.

\subsection{Framework Extensions}
\label{sec:fmt}
FMT*, proposed by Janson and Pavone, is a recently introduced asymptotically-optimal algorithm which is shown to converge to an optimal solution faster than PRM* or RRT*.
It uses a set of probabilistically-drawn configurations to construct a tree, which grows in cost-to-come space. Unlike RRT*, it is a batch algorithm that works with a predefined number of nodes~$n$.

We first describe the FMT* algorithm (outlined in Alg.~\ref{alg:fmt}),
we continue to describe how to apply it in an anytime fashion and conclude by describing how to apply our framework to this anytime variant.
FMT* samples $n$ collision-free nodes\footnote{By slight abuse of notation, \texttt{sample\_free($n$)} is a procedure returning~$n$ collision-free samples.} $V$ (line~1) and 
builds a minimum-cost spanning tree rooted at the initial configuration by maintaining two sets of nodes $H, W$ such that 
$H$ is the set of nodes added to the tree that may be expanded and $W$ is the set of nodes not in the tree~(line~2).
It then computes for each node the set of nearest neighbors\footnote{The nearest-neighbor computation can be delayed and performed only when needed but we present the batched mode of computation to simplify the exposition.} of radius~$r(n)$~(line~4).
The algorithm repeats the following process: the node $z$ with the lowest cost-to-come value is chosen from $H$ (lines 5 and 17).
For each neighbor $x$ of $z$ that is not already in $H$, the algorithm finds its neighbor $y \in H$ such that the cost-to-come of $y$ added to the distance between $y$ and $x$ is minimal~(lines~7-10).
If the local path between $y$ and $x$ is free, $x$ is added to~$H$ with $y$ as its parent~(lines~11-13).
At the end of each iteration $z$ is removed from~$H$~(line~14).
The algorithm runs until a solution is found or there are no more nodes to process.

We now outline a straightforward enhancement to FMT* to make it anytime.
As noted in previous work (see, e.g.,~\cite{WBC13}) one can turn a batch
algorithm into an anytime one by the following (general) approach: choose an
initial (small) number of samples $n = n_0$ and apply the algorithm. 
As long as time permits, double $n$ and repeat the process. 
We call this version anytime FMT* or aFMT*, for short.

We can further speed up this method by reusing both existing samples and connections from previous iterations.
Assume the algorithm was run with $n$ samples and now we wish to re-run it with $2n$ samples.
In order to obtain the $2n$ random samples, we take the $n$ random samples from the previous iteration together with $n$ new additional random samples.
For each node that was used in iteration $i-1$, on average half of its neighbors in iteration $i$ are nodes from iteration $i-1$ and half of its neighbors are newly-sampled nodes.
Thus, if we cache the results of calls to the local planner, we can use them in future iterations using the framework presented in this paper. We call this algorithm LBT-aFMT*.

Alg.~\ref{alg:near_optimal_fmt} outlines one iteration of LBT-aFMT*, differences between FMT* and LBT-aFMT* are colored in red. 
Similar to LBT-RRT, LBT-aFMT* constructs two trees $\calT_{lb}$ and $\calT_{ub}$ and maintains the bounded approximation invariant and the lower bound invariant. 
The two invariants are maintained by using a cache that can efficiently answer if the local path between two configurations is collision-free for a subset of the nodes used.
The proof of near-optimality of LBT-aFMT*, which is omitted, follows the same lines as the analysis presented in Section~\ref{susbsec:analysis}.

In order to demonstrate the effectiveness of applying our lower-bound  framework to aFMT*, we compared the two algorithms, namely aFMT* and LBT-aFMT*, in the case of a three-dimensional configuration space which we call the Corridors scenario (Fig.~\ref{fig:corr}). It consists of a planar hexagon  robot that can translate and rotate amidst a collection of small obstacles. 
There are two main corridors from the start to the goal position, a wide one and a narrow one.
A relatively small number of samples suffices to find a path through the wide corridor, yet in order to compute a low-cost path through the narrow corridor, many more samples are needed.
This demonstrates how LBT-aFMT* refrains from refining an existing solution when no significant advantage can be obtained.
Fig.~\ref{fig:corr_success} depicts the success rate of finding a path through the narrow corridor as time progresses.
Clearly, for these types of settings LBT-aFMT* performs favorably over aFMT* even for large approximation factors.
For example, to reach a $60\%$ succes rate in finding a path through the narrow corridor, LBT-aFMT*, run with $\varepsilon = 0.5$, needs half of the time needed by aFMT*.

\begin{figure*}[t,b]
  \centering
  \subfloat
   [\sf ]
   { 
   	\includegraphics[height =4.25 cm]{./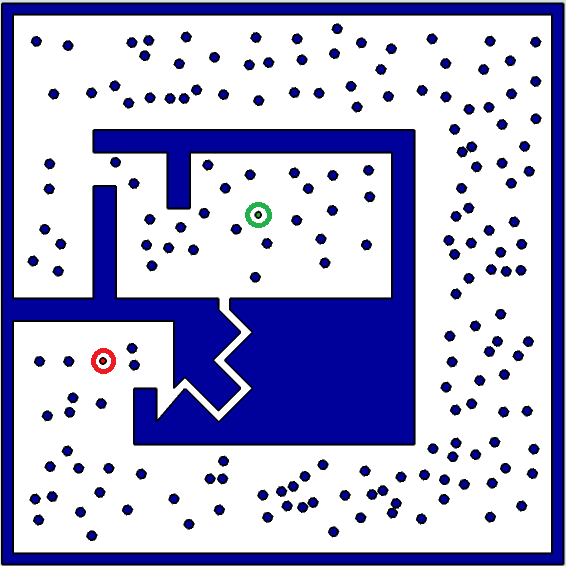}
   	\label{fig:corr}
   }
  \subfloat
   [\sf ]
   {
   	\includegraphics[height =4.25 cm]{./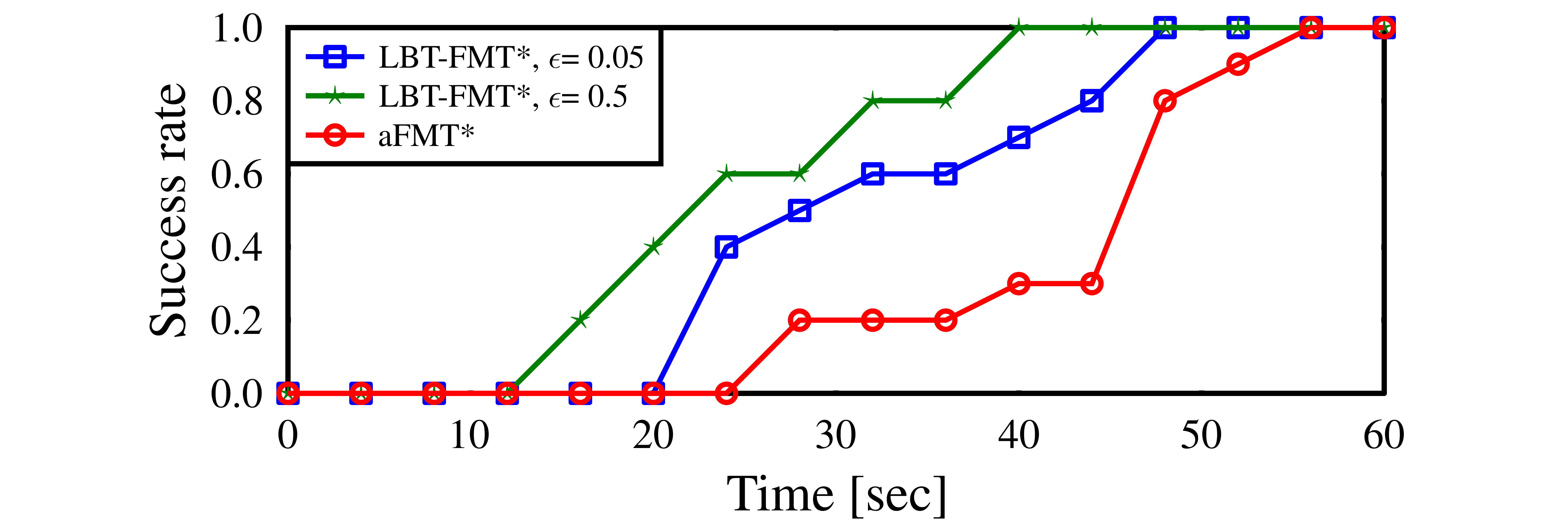}
   	\label{fig:corr_success}
   }
  \caption{\sf 	(a) Corridors scenario---Start and target configurations 
  							are depicted by green and  red circles, respectively.
  							The narrow corridor appears in the lower half of the workspace.
  							(b) Success rate to find a path through the narrow corridor.}
  \label{fig:corridor_figs}
	\vspace{-3mm}
\end{figure*}

\begin{algorithm}[t, b]
\caption{\texttt{fast\_marching\_tree}}
\label{alg:fmt}
\begin{algorithmic}[1]
	\STATE	$V \leftarrow \set{x_{\text{init}}} \cup \texttt{sample\_free}(n)$;
					\hspace{1mm}
					$E \leftarrow \emptyset$;
					\hspace{1mm}
					$\calT\leftarrow (V,E)$
  \STATE	$W \leftarrow V \setminus \set{x_{\text{init}}}$;
  				\hspace{3mm}
					$H \leftarrow \set{x_{\text{init}}}$
  \FORALL{$v \in V$} 
  	\STATE	$N_v \leftarrow 
  					\texttt{nearest\_neighbors}(V \setminus \set{v}, v, r(n))$
  \ENDFOR
  

  \STATE	$z \leftarrow x_{\text{init}}$
	\WHILE {$z \notin \calX_{\text{Goal}}	$}
		\STATE $H_{\text{new}} \leftarrow \emptyset$;
					 \hspace{3mm}
					 $X_{\text{near}} \leftarrow W \cap N_z$
		

		\FOR  {$x \in X_{\text{near}}$}
			\STATE $Y_{\text{near}} \leftarrow H \cap N_x$
							\hspace{3mm}
			\STATE $y_{\text{min}} \leftarrow \arg \min_{y \in Y_{\text{near}}} 
								\set{\texttt{cost}_{\calT}(y) + \texttt{cost}(y,x)} $
								
			
			\IF {\texttt{collision\_free}$(y_{\text{min}}, x)$}
				\STATE $\calT.\texttt{parent}(x) \leftarrow y_{\text{min}}$
				\STATE $H_{\text{new}} \leftarrow H_{\text{new}} \cup \set{x}$;
							 \hspace{3mm}
					 		 $W \leftarrow W \setminus \set{x}$
			\ENDIF
		\ENDFOR
								
					
		\STATE $H \leftarrow (H \cup H_{\text{new}}) \setminus \set{z}$
		\IF {$H = \emptyset$}
			\RETURN FAILURE
		\ENDIF
		
		\STATE $z \leftarrow \arg \min_{y \in H} 
								\set{\texttt{cost$_{\calT}$}(y)} $
														
	\ENDWHILE

	\RETURN PATH \hspace{3mm}
\end{algorithmic}
\end{algorithm}

\begin{algorithm}[tb]
\caption{\texttt{LBT-aFMT$^*$ (Cache)}}
\label{alg:near_optimal_fmt}
\begin{algorithmic}[1]
	\STATE	$V \leftarrow \set{x_{\text{init}}} \cup \texttt{sample\_free}(n)$;
					\hspace{1mm}
					$E \leftarrow \emptyset$;
	{\color{red}
	\STATE	$\calT_{lb} \leftarrow (V,E)$;
					\hspace{1mm}
					$\calT_{apx} \leftarrow (V,E)$
	}
  \STATE	$W \leftarrow V \setminus \set{x_{\text{init}}}$;
  				\hspace{3mm}
					$H \leftarrow \set{x_{\text{init}}}$
  \FORALL{$v \in V$} 
  	\STATE	$N_v \leftarrow 
  					\texttt{nearest\_neighbors}(V \setminus \set{v}, v, r(n))$
  \ENDFOR
  

  \STATE	$z \leftarrow x_{init}$
	\WHILE {$z \notin \calX_{Goal}	$}
		\STATE $H_{\text{new}} \leftarrow \emptyset$;
					 \hspace{3mm}
					 $X_{\text{near}} \leftarrow W \cap N_z$
		

		\FOR  {$x \in X_{near}$}
			\STATE $Y_{near} \leftarrow H \cap N_x$
							\hspace{3mm}
			{\color{red}
			\STATE $y_{lb} \leftarrow \arg \min_{y \in Y_{near}} 
								\set{\texttt{cost}_{\Tlb}(y) + \texttt{cost}(y,x)} $
			\STATE $Y_{apx} \leftarrow 
								\set{y \in Y_{near} \ | \ (y,x) \in \texttt{Cache)} }$
			\STATE $y_{apx} \leftarrow \arg \min_{y \in Y_{apx}} 
								\set{\texttt{cost}_{\Tub}(y) + \texttt{cost}(y,x) }$
		

			\STATE $c_{lb} \leftarrow \texttt{cost}_{\Tlb}(y_{lb}) + 
																 \texttt{cost}(y_{lb},x) $
			\STATE $c_{apx} \leftarrow \texttt{cost}_{\Tub}(y_{apx}) + 
																 \texttt{cost}(y_{apx},x) $

			\IF {$c_{apx}
							\leq 
					 (1+\varepsilon) \cdot  c_{lb}$}
				\STATE $\Tlb.\texttt{parent}(x) \leftarrow y_{lb}$
				\STATE $\Tub.\texttt{parent}(x) \leftarrow y_{apx}$
			
					{\color{black}			
					\STATE $H_{new} \leftarrow H_{new} \cup \set{x}$;
							 	 \hspace{3mm}
							   $W \leftarrow W \setminus \set{x}$					}				
			\ELSE
				\IF {\texttt{collision\_free}$(y_{lb}, x)$}
					\STATE $\Tlb.\texttt{parent}(x) \leftarrow y_{lb}$
					\STATE $\Tub.\texttt{parent}(x) \leftarrow y_{lb}$
				
					{\color{black}			
					\STATE $H_{new} \leftarrow H_{new} \cup \set{x}$;
							 	 \hspace{3mm}
							   $W \leftarrow W \setminus \set{x}$					}
				\ENDIF
			\ENDIF
			}
		\ENDFOR
		
		\vspace{2mm}
					
		\STATE $H \leftarrow (H \cup H_{new}) \setminus \set{z}$
		\IF {$H = \emptyset$}
			\RETURN FAILURE
		\ENDIF
		
		\STATE $z \leftarrow \arg \min_{y \in H} 
								\set{\texttt{cost}(y)} $
						
	\ENDWHILE

	\RETURN PATH 
\end{algorithmic}
\end{algorithm}						

}


\begin{IEEEbiography}[{\includegraphics[width=1in,height=1.25in,clip,keepaspectratio]{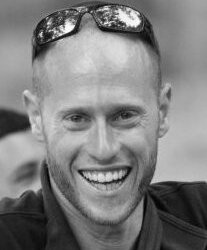}}]
{Oren Salzman} is a PhD-student at 
the School for Computer Science, Tel-Aviv University, Tel Aviv 69978, ISRAEL.
\end{IEEEbiography}

\begin{IEEEbiography}[{\includegraphics[width=1in,height=1.25in,clip,keepaspectratio]{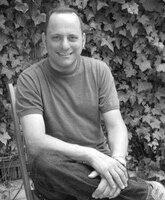}}]
{Dan Halperin} is a Professor at 
the School for Computer Science, 
Tel-Aviv University, Tel Aviv 69978, ISRAEL.
\end{IEEEbiography}


\begin{thebibliography}{10}
\providecommand{\url}[1]{#1}
\csname url@samestyle\endcsname
\providecommand{\newblock}{\relax}
\providecommand{\bibinfo}[2]{#2}
\providecommand{\BIBentrySTDinterwordspacing}{\spaceskip=0pt\relax}
\providecommand{\BIBentryALTinterwordstretchfactor}{4}
\providecommand{\BIBentryALTinterwordspacing}{\spaceskip=\fontdimen2\font plus
\BIBentryALTinterwordstretchfactor\fontdimen3\font minus
  \fontdimen4\font\relax}
\providecommand{\BIBforeignlanguage}[2]{{%
\expandafter\ifx\csname l@#1\endcsname\relax
\typeout{** WARNING: IEEEtran.bst: No hyphenation pattern has been}%
\typeout{** loaded for the language `#1'. Using the pattern for}%
\typeout{** the default language instead.}%
\else
\language=\csname l@#1\endcsname
\fi
#2}}
\providecommand{\BIBdecl}{\relax}
\BIBdecl

\bibitem{KSLO96}
L.~E. Kavraki, P.~\v{S}vestka, J.-C. Latombe, and M.~H. Overmars,
  ``Probabilistic roadmaps for path planning in high dimensional configuration
  spaces,'' \emph{IEEE Trans. Robot.}, vol.~12, no.~4, pp. 566--580, 1996.

\bibitem{KL00}
J.~J. Kuffner and S.~M. LaValle, ``{RRT}-{C}onnect: An efficient approach to
  single-query path planning,'' in \emph{ICRA}, 2000, pp. 995--1001.

\bibitem{CBHKKLT05}
H.~Choset, K.~M. Lynch, S.~Hutchinson, G.~Kantor, W.~Burgard, L.~E. Kavraki,
  and S.~Thrun, \emph{Principles of Robot Motion: Theory, Algorithms, and
  Implementation}.\hskip 1em plus 0.5em minus 0.4em\relax MIT Press, June 2005.

\bibitem{KF11}
S.~Karaman and E.~Frazzoli, ``Sampling-based algorithms for optimal motion
  planning,'' \emph{I. J. Robotic Res.}, vol.~30, no.~7, pp. 846--894, 2011.

\bibitem{NRH10}
O.~Nechushtan, B.~Raveh, and D.~Halperin, ``Sampling-diagram automata: A tool
  for analyzing path quality in tree planners,'' in \emph{WAFR}, 2010, pp.
  285--301.

\bibitem{GO07}
R.~Geraerts and M.~H. Overmars, ``Creating high-quality paths for motion
  planning,'' \emph{I. J. Robotic Res.}, vol.~26, no.~8, pp. 845--863, 2007.

\bibitem{REH11}
B.~Raveh, A.~Enosh, and D.~Halperin, ``A little more, a lot better: Improving
  path quality by a path-merging algorithm,'' \emph{IEEE Trans. Robot.},
  vol.~27, no.~2, pp. 365--371, 2011.

\bibitem{ABDJV98}
N.~M. Amato, O.~B. Bayazit, L.~K. Dale, C.~Jones, and D.~Vallejo, ``{OBPRM}: an
  obstacle-based {PRM} for 3{D} workspaces,'' in \emph{WAFR}, 1998, pp.
  155--168.

\bibitem{LTA03}
J.-M. Lien, S.~L. Thomas, and N.~M. Amato, ``A general framework for sampling
  on the medial axis of the free space,'' in \emph{ICRA}, 2003, pp. 4439--4444.

\bibitem{US03}
C.~Urmson and R.~G. Simmons, ``Approaches for heuristically biasing {RRT}
  growth,'' in \emph{IROS}, 2003, pp. 1178--1183.

\bibitem{SWT09}
A.~C. Shkolnik, M.~Walter, and R.~Tedrake, ``Reachability-guided sampling for
  planning under differential constraints,'' in \emph{ICRA}, 2009, pp.
  2859--2865.

\bibitem{SLN00}
T.~Sim{\'e}on, J.-P. Laumond, and C.~Nissoux, ``Visibility-based probabilistic
  roadmaps for motion planning,'' \emph{Advanced Robotics}, vol.~14, no.~6, pp.
  477--493, 2000.

\bibitem{WB08}
N.~A. Wedge and M.~S. Branicky, ``On heavy-tailed runtimes and restarts in
  rapidly-exploring random trees,'' in \emph{AAAI}, 2008, pp. 127--133.

\bibitem{SSAH14}
O.~Salzman, D.~Shaharabani, P.~K. Agarwal, and D.~Halperin, ``Sparsification of
  motion-planning roadmaps by edge contraction,'' \emph{I. J. Robotic Res.},
  vol.~33, no.~14, pp. 1711--1725, 2014.

\bibitem{MB11-IROS}
J.~D. Marble and K.~E. Bekris, ``Computing spanners of asymptotically optimal
  probabilistic roadmaps,'' in \emph{IROS}, 2011, pp. 4292--4298.

\bibitem{MB11-ISRR}
------, ``Asymptotically near-optimal is good enough for motion planning,'' in
  \emph{ISRR}, 2011.

\bibitem{MB12}
------, ``Towards small asymptotically near-optimal roadmaps,'' in \emph{ICRA},
  2012, pp. 2557--2562.

\bibitem{DB14}
A.~Dobson and K.~E. Bekris, ``Sparse roadmap spanners for asymptotically
  near-optimal motion planning,'' \emph{I. J. Robotic Res.}, vol.~33, no.~1,
  pp. 18--47, 2014.

\bibitem{PKSFTW11}
A.~Perez, S.~Karaman, A.~Shkolnik, E.~Frazzoli, S.~Teller, and M.~Walter,
  ``Asymptotically-optimal path planning for manipulation using incremental
  sampling-based algorithms,'' in \emph{IROS}, 2011, pp. 4307--4313.

\bibitem{INMAH13}
F.~Islam, J.~Nasir, U.~Malik, Y.~Ayaz, and O.~Hasan, ``{RRT}*-{S}mart: Rapid
  convergence implementation of {RRT}* towards optimal solution,'' \emph{Int.
  J. Adv. Rob. Sys.}, vol.~10, pp. 1--12, 2013.

\bibitem{AT13}
O.~Arslan and P.~Tsiotras, ``Use of relaxation methods in sampling-based
  algorithms for optimal motion planning,'' in \emph{ICRA}, 2013, pp.
  2413--2420.

\bibitem{JP13}
L.~Janson and M.~Pavone, ``Fast marching trees: a fast marching sampling-based
  method for optimal motion planning in many dimensions,'' \emph{CoRR}, vol.
  abs/1306.3532, 2013.

\bibitem{LH14}
J.~Luo and K.~Hauser, ``An empirical study of optimal motion planning,'' in
  \emph{IROS}, 2014, pp. 1761 -- 1768.

\bibitem{SH14-arxiv}
O.~Salzman and D.~Halperin, ``Asymptotically-optimal motion planning using
  lower bounds on cost,'' \emph{CoRR}, vol. abs/1403.7714, 2014.

\bibitem{LLB13}
Z.~Littlefield, Y.~Li, and K.~E. Bekris, ``Efficient sampling-based motion
  planning with asymptotic near-optimality guarantees for systems with
  dynamics,'' in \emph{IROS}, 2013, pp. 1779--1785.

\bibitem{FS06}
D.~Ferguson and A.~Stentz, ``Anytime {RRT}s,'' in \emph{IROS}, 2006, pp. 5369
  -- 5375.

\bibitem{APD11}
R.~Alterovitz, S.~Patil, and A.~Derbakova, ``Rapidly-exploring roadmaps:
  Weighing exploration vs. refinement in optimal motion planning,'' in
  \emph{ICRA}, 2011, pp. 3706--3712.

\bibitem{LSMK13}
R.~Luna, I.~A. {\c S}ucan, M.~Moll, and L.~E. Kavraki, ``Anytime solution
  optimization for sampling-based motion planning,'' in \emph{ICRA}, 2013, pp.
  5053--5059.

\bibitem{KWPFT11}
S.~Karaman, M.~Walter, A.~Perez, E.~Frazzoli, and S.~Teller, ``Anytime motion
  planning using the {RRT},'' in \emph{ICRA}, 2011, pp. 1478--1483.

\bibitem{SH14}
O.~Salzman and D.~Halperin, ``Asymptotically near-optimal {RRT} for fast,
  high-quality, motion planning,'' in \emph{ICRA}, 2014, pp. 4680--4685.

\bibitem{FMN00}
D.~Frigioni, A.~Marchetti{-}Spaccamela, and U.~Nanni, ``Fully dynamic
  algorithms for maintaining shortest paths trees,'' \emph{J. Algorithms},
  vol.~34, no.~2, pp. 251--281, 2000.

\bibitem{RR96}
G.~Ramalingam and T.~W. Reps, ``On the computational complexity of dynamic
  graph problems,'' \emph{Theor. Comput. Sci.}, vol. 158, no. 1{\&}2, pp.
  233--277, 1996.

\bibitem{WvdB13}
D.~J. Webb and J.~van~den Berg, ``Kinodynamic {RRT}*: Asymptotically optimal
  motion planning for robots with linear dynamics,'' in \emph{ICRA}, 2013, pp.
  5054--5061.

\bibitem{SMK12}
I.~A. \c{S}ucan, M.~Moll, and L.~E. Kavraki, ``The {O}pen {M}otion {P}lanning
  {L}ibrary,'' \emph{{IEEE} Robotics \& Automation Magazine}, vol.~19, no.~4,
  pp. 72--82, 2012.

\bibitem{KLF04}
S.~Koenig, M.~Likhachev, and D.~Furcy, ``Lifelong planning {A}∗,''
  \emph{Artificial Intelligence}, vol. 155, no.~1, pp. 93--146, 2004.

\bibitem{P84}
J.~Pearl, \emph{Heuristics: Intelligent Search Strategies for Computer Problem
  Solving}.\hskip 1em plus 0.5em minus 0.4em\relax Addison-Wesley, 1984.

\bibitem{H15}
K.~Hauser, ``Lazy collision checking in asymptotically-optimal motion
  planning,'' in \emph{ICRA}, 2015, to appear.

\bibitem{DDFLP13}
A.~D'Andrea, M.~D'Emidio, D.~Frigioni, S.~Leucci, and G.~Proietti,
  ``Dynamically maintaining shortest path trees under batches of updates,'' in
  \emph{Structural Information and Communication Complexity}, ser. Lecture
  Notes in Computer Science, T.~Moscibroda and A.~Rescigno, Eds.\hskip 1em plus
  0.5em minus 0.4em\relax Springer International Publishing, 2013, vol. 8179,
  pp. 286--297.

\bibitem{JP12}
L.~Jaillet and J.~M. Porta, ``Asymptotically-optimal path planning on
  manifolds,'' in \emph{RSS}, 2012.

\bibitem{AS11}
B.~Akgun and M.~Stilman, ``Sampling heuristics for optimal motion planning in
  high dimensions,'' in \emph{IROS}, 2011, pp. 2640--2645.

\bibitem{AD99}
N.~M. Amato and L.~K. Dale, ``Probabilistic roadmap methods are embarrassingly
  parallel,'' in \emph{ICRA}, 1999, pp. 688--694.

\bibitem{IA12}
J.~Ichnowski and R.~Alterovitz, ``Parallel sampling-based motion planning with
  superlinear speedup,'' in \emph{IROS}, 2012, pp. 1206--1212.

\bibitem{BKF11}
J.~Bialkowski, S.~Karaman, and E.~Frazzoli, ``Massively parallelizing the {RRT}
  and the {RRT*},'' in \emph{IROS}, 2011, pp. 3513--3518.

\bibitem{WBC13}
W.~Wang, D.~Balkcom, and A.~Chakrabarti, ``A fast streaming spanner algorithm
  for incrementally constructing sparse roadmaps,'' \emph{IROS}, pp.
  1257--1263, 2013.

\end{thebibliography}
\end{document}